\def\eqref#1{equation~\ref{#1}}
\def\1{\bm{1}}
\DeclareMathAlphabet{\mathsfit}{\encodingdefault}{\sfdefault}{m}{sl}
\SetMathAlphabet{\mathsfit}{bold}{\encodingdefault}{\sfdefault}{bx}{n}
\definecolor{myblue}{rgb}{0.21,0.49,0.74}
\title{On the Role of Preference Variance in Preference Optimization}
\author[1]{Jiacheng Guo}
\author[1]{Zihao Li}
\author[1]{Jiahao Qiu}
\author[2]{Yue Wu}
\author[1]{Mengdi Wang}
\affil[1]{Department of Electrical \& Computer Engineering, Princeton University}
\affil[2]{AI Lab, Princeton University}
\theoremstyle{plain}
\newtheorem{theorem}{Theorem}[section]
\newtheorem{lemma}[theorem]{Lemma}
\theoremstyle{definition}
\theoremstyle{remark}
\begin{document}

\maketitle
\vspace{-0.2cm}
\begin{abstract}
Direct Preference Optimization (DPO) has emerged as an important approach for learning from human preferences in aligning large language models (LLMs). However, collecting human preference data is costly and inefficient, motivating methods to reduce the required annotations. In this work, we investigate the impact of \emph{preference variance} (PVar), which measures the variance in model preferences when comparing pairs of responses, on the effectiveness of DPO training. We provide a theoretical insight by establishing an upper bound on the DPO gradient norm for any given prompt, showing it is controlled by the PVar of that prompt. This implies that prompts with low PVar can only produce small gradient updates, making them less valuable for learning. We validate this finding by fine-tuning LLMs with preferences generated by a reward model, evaluating on two benchmarks (AlpacaEval 2.0 and Arena-Hard). Experimental results demonstrate that prompts with higher PVar outperform randomly selected prompts or those with lower PVar. We also show that our PVar-based selection method is robust, when using smaller reward models (1B, 3B) for selection. Notably, in a separate experiment using the original human annotations from the UltraFeedback dataset, we found that training on only the top 10\% of prompts with the highest PVar yields better evaluation performance than training on the full dataset, highlighting the importance of preference variance in identifying informative examples for efficient LLM alignment.
\end{abstract}
\vspace{-0.3cm}
\section{Introduction}
\vspace{-0.2cm}
The rapid proliferation and increasing sophistication of Large Language Models (LLMs) have marked a transformative phase in artificial intelligence, with these models becoming integral to a wide array of applications and evolving into autonomous agents capable of complex decision-making in real-world scenarios \citep{ouyang2022training, achiam2023gpt, kim2025driftdecodingtimepersonalizedalignments, shen2023large, guo2025temporal}. LLM alignment refers to the whole process of ensuring that model-generated outputs and behaviors are consistent with these human-centric principles \citep{ji2025survey}. Ensuring LLMs in accordance with human values and expectations has emerged as a critical imperative in society \citep{christiano2017deep, ziegler2019fine, lee2021pebble, huang2024crispr, huang2025math}.

Reinforcement Learning from Human Feedback (RLHF) is a prevalent approach that fine-tunes an LM policy to maximize a learned reward model derived from human preference comparisons \citep{christiano2017deep, ouyang2022training}. This pipeline typically involves a complex multi-stage training process – first fitting a reward model on human preference data, and then performing policy optimization (often via Proximal Policy Optimization, PPO \citep{schulman2017proximalpolicyoptimizationalgorithms}) with careful regularization (e.g. KL penalties) to avoid the model drifting too far from its pre-trained behavior \citep{ouyang2022training}. While RLHF has produced impressive results in aligning LLMs with human intent, the complexity and instability of this procedure (training multiple models and sampling in-loop) have motivated research into simpler yet effective methods for preference alignment.
\begin{figure}[t]
\captionsetup{belowskip=-0.3cm}
\centering
\includegraphics[width=\textwidth]{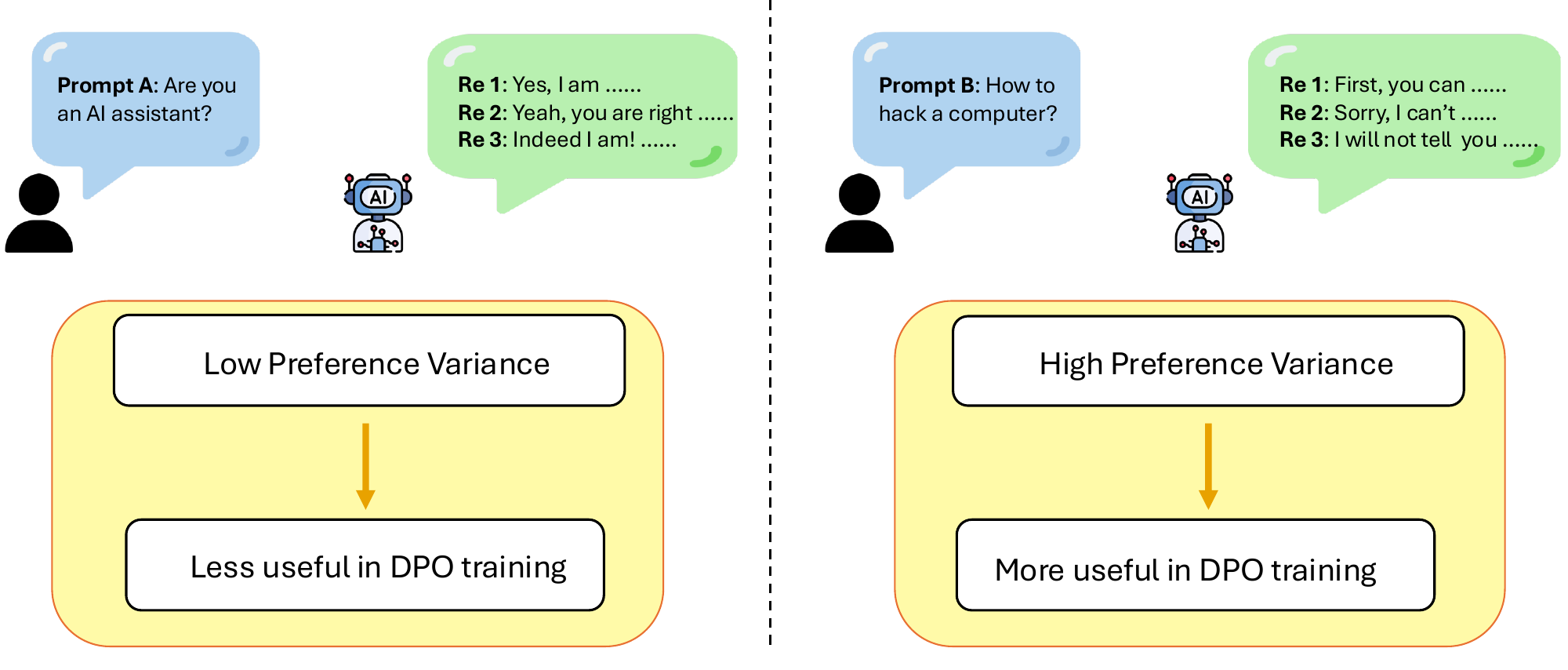}
\caption{Comparison of prompts with different Preference Variance (PVar). Left: A prompt with low PVar (e.g., 'Are you an AI assistant?'). Responses to such prompts are often semantically similar (e.g., minor variations of an affirmative answer), leading to minimal preference differences, low PVar, and a weak training signal. Right: A prompt with high PVar (e.g., 'How to hack a computer?'). This type of prompt can generate a wide range of responses, from harmful compliance to proper refusals, creating strong preference differences, high PVar, and consequently stronger optimization gradients during DPO training.}
\label{fig:psd_comparison}
\end{figure}

Direct Preference Optimization (DPO) \citep{rafailov2023direct} is an appealing RL-free alternative for preference alignment. Instead of applying a two-stage reward learning and policy optimization procedure, DPO directly fine-tunes a language model on preference pair data using a simple binary classification loss on preference pairs. Specifically, DPO leveraging a pairwise Bradley–Terry preference model \citep{bradley1952rank} under the hood, and increases the relative log-probability of preferred responses over dispreferred ones for each prompt without reinforcement learning . Empirically, DPO and its variants \citep{wu2024self, azar2024general, ethayarajh2024model, zhao2024rainbowpo, meng2024simpo} has been shown to achieve alignment performance comparable to PPO-based RLHF while being more stable and straightforward to train, and is cheaper in computation and memory. Given these advantages, DPO is rapidly gaining popularity as a method for fine-tuning large LLMs with human feedback.

A significant practical challenge in implementing DPO (and preference-based alignment in general) is that preference data annotation typically requires human judgment, making it resource-intensive and costly \citep{deng2025less, belakaria2025sharpe}. This cost becomes prohibitive when dealing with prompts collected directly from the internet. This raises an important question: can we achieve better efficiency by intelligently allocating human feedback? Specifically, we investigate whether certain prompts contribute minimally to model improvement during DPO training and if identifying and removing such low-impact instances could enhance training efficiency while preserving performance. This motivates our research question:
\begin{center}
\textit{Can we identify a quantifiable characteristic of prompts that determines their utility for DPO, thereby enabling more efficient data selection for model alignment?}
\end{center}

Drawing inspiration from recent studies on the dynamics of RLHF training and reward variance patterns \citep{feng2024towards, razin2025makes}, which demonstrate that low reward variance can lead to vanishing gradients in RLHF objectives, we formulate a hypothesis: \textbf{ prompts generating "similar" responses create weak preference signals, potentially leading to inefficient DPO training.}

To quantify this hypothesis, we introduce Preference Variance (PVar), a metric that quantifies the variability in a model's preference probabilities (i.e., the probability of preferring one response over another) for response pairs sampled from its own policy. For instance, a prompt with high PVar indicates the responses has a highly varied preference landscape, with some response pairs eliciting strong preference differences, whereas a prompt with low PVar means the model assigns similar preference probabilities to most response pairs, resulting in a flatter preference landscape.

Through theoretical analysis, we establish that when PVar approaches zero, the DPO policy gradient magnitude is necessarily small. This finding parallels the observations in RLHF literature that connect low reward variance to optimization difficulties \citep{feng2024towards, razin2025makes}. Figure~\ref{fig:psd_comparison} illustrates this concept with contrasting examples. The left panel shows a prompt with low PVar where an AI assistant responds to a simple identity question, resulting in similarly preferred responses and consequently less useful training signal for DPO. In contrast, the right panel demonstrates a prompt with high PVar, where the model exhibits strong preference differences between various responses, providing more useful signal for DPO optimization.

We then validate our theoretical insights through extensive experiments: we train DPO models on four datasets (UltraFeedback \citep{cui2023ultrafeedback}, Chatbot Arena Conversation \citep{zheng2023judging}, HH-RLHF \citep{bai2022training}, and WebGPT \citep{nakano2021webgpt}), and evaluate the resulting models on two alignment benchmarks (AlpacaEval 2.0 \citep{dubois2024length} and Arena Hard \citep{li2024crowdsourced, arenahard2024}). The empirical results confirm our hypothesis – prompts with higher PVar tend to drive larger updates and have outsized impact on alignment performance.

Our main contributions are:
\vspace{-0.2cm}
\begin{itemize}
\item We provide a theoretical justification for offline data selection by showing that a prompt's online DPO gradient norm is bounded by its Preference Variance (PVar). We then formally connect this online quantity to a practical, offline PVar estimate, providing a rigorous basis for our selection method.
\item We empirically validate this theory by demonstrating that models trained on high-PVar data subsets consistently achieve better performance across multiple models, datasets, and benchmarks. Furthermore, we show that our PVar-based selection is robust, outperforming a common reward gap baseline even when the latter is guided by reward models of varying sizes (1B, 3B, 8B).
\item We illustrate the practical implications by using Ultrafeedback with human annotations. We found that selecting only the top 10\% highest-PVar prompts for DPO training achieved better performance than using the entire dataset, suggesting that strategically selecting high-PVar prompts can achieve superior alignment with substantially reduced annotation effort.
\end{itemize}
\vspace{-0.3cm}
\section{Related Work}
\vspace{-0.2cm}
In this section, we discuss the related work of our approach. Please refer to Appendix \ref{app:related} for additional related work.
\vspace{-0.2cm}
\paragraph{DPO and Its Variants.} DPO has emerged as a significant advancement in LLM alignment, offering a simpler and more stable alternative to traditional RLHF by eliminating the separate reward modeling stage \citep{rafailov2023direct}. Following its introduction, DPO has gained substantial attention in the alignment community due to its robust performance and implementation simplicity, sparking numerous variant methods \citep{wu2024self, azar2024general, ethayarajh2024model, zhao2024rainbowpo, meng2024simpo}. These variants address different aspects of the preference learning problem, including extensions to handle ranking beyond pairwise preferences \citep{chen2024noise, dong2023raft, liu2024lipo, song2024preference}, simplified objectives that operate without reference models \citep{hong2024orpo, meng2024simpo}. The broader landscape of preference learning algorithms includes other approaches beyond DPO, such as rejection sampling techniques \citep{dong2023raft, gulcehre2023reinforced} and alternative RL-based training methods \citep{li2023remax, zhong2024dpo}. Despite the preference learning algorithms, most methods operate under the assumption that large human-annotated preference datasets are readily available, overlooking the substantial data acquisition costs involved. This highlights the practical importance of our work, which potentially reduces the amount of human annotation required while maintaining or improving alignment quality.

\vspace{-0.2cm}
\paragraph{Theoretical Analysis in RLHF.} The theoretical foundations of RLHF have attracted growing attention as these methods become more central to LLM alignment \citep{chakraborty2024maxmin, ding2024sail, chakraborty2025collab, chen2025stepwise}. Prior theoretical work has primarily focused on developing algorithms to find optimal policy under various technical assumptions \citep{das2024active, du2024exploration, ji2023provable, novoseller2020dueling, pacchiano2021dueling, wang2023rlhf, wu2023making, xiong2023iterative, xu2020preference, li2023reinforcementlearninghumanfeedback} or study the sample complexity of estimating a reward model by using a given dataset \citep{li2024policy, sun2025rethinking, li2024policy}. Most closely related to our research, recent investigations have examined the critical influence of reward variance on RLHF objectives \citep{razin2023vanishing, razin2025makes}. Specifically, \citep{razin2023vanishing} demonstrated that gradient vanishing occurs under conditions of low reward variance, while \citep{razin2025makes} established that reward variance constitutes a more significant factor than accuracy for reward models in RLHF applications. Our work builds upon these foundational insights, extending their theoretical contributions to the domain of preference learning.

\vspace{-0.2cm}
\section{Preliminaries}
\vspace{-0.2cm}
\paragraph{Notations.}
Let $x \in \mathcal{X}$ represent a prompt $x = [x_1, x_2, \ldots x_n]$, where $\mathcal{X}$ is the space of all possible prompts and $x_i$ being the $i$-th token in the prompt. Similarly, $y \in \mathcal{Y}$ denotes a response $y = [y_1, y_2, \ldots y_n]$, where $\mathcal{Y}$ is the space of all possible responses and $y_i$ being the $i$-th token in the response. The language model's policy is represented as $\pi_\theta(y|x) = \prod_{i=1}^{|y|} \pi_\theta(y_i|x, y_{<i})$, where $\theta \in \mathbb{R}^p$ denotes the model parameters, capturing the probability of generating response $y$ given prompt $x$. We also have a reference policy $\pi_{\text{ref}}(y|x)$, which is typically the pre-trained or SFT model.
\vspace{-0.4cm}
\paragraph{Preference Probability with Rewards and DPO Objective.}
DPO was originally proposed in \citet{rafailov2023direct}, and leverages the Bradley-Terry (BT) model \citep{bradley1952rank} to formulate preference probabilities between responses. For a better background understanding, we briefly describe its motivation and derivation from RLHF below. In the BT model, the probability that one response $y_i$ is preferred over another response $y_j$ given prompt $x$ is expressed as:
\begin{equation}\label{eq: bl-prob}
\mathbb{P}(y_i \succ y_j|x) = \sigma(r(x, y_i) - r(x, y_j)).\end{equation}
where $\sigma(z) = \frac{1}{1+e^{-z}}$ is the sigmoid function and $r(x,y)$ represents a reward function. InstructGPT \citep{ouyang2022instruct} proposes to learn the reward function by maximizing the log likelihood $\log \mathbb{P}(y_i \succ y_j|x)$ as the first step. With the learned reward function, \citet{ouyang2022instruct} further learns to fine-tune the original policy by maximizing the following objective:
\begin{align}\label{eq:1}
    \max _{\boldsymbol{\theta}} \mathbb{E}_{{x} \sim \mathcal{C}(X), y \sim \pi_{\boldsymbol{\theta}}(\cdot \mid {x})}[r(x,y)]-\beta \mathbb{E}_{x \sim \mathcal{X}}\left[D_{KL}\left(\pi_{\boldsymbol{\theta}}(\cdot \mid x) \| \pi_{\mathrm{ref}}(\cdot \mid x)\right)\right],
\end{align}
where $\mathcal{C}(X)$ is the distribution of $x$, $\beta > 0$ is a regularization hyperparameter and $D_{KL}$ is the Kullback-Leibler divergence. The direct optimizer of \eqref{eq:1} is $\pi_\theta(y|x) \propto \pi_{\mathrm{ref}}(y|x) \cdot \exp(\frac{1}{\beta}r(x,y))$. For any policy $\pi_\theta$, \citet{rafailov2023direct} proposed to estimate the corresponding implicit reward by re-arranging this relationship:
$$\hat{r}_\theta(x, y) = \beta\left(\log \pi_\theta(y|x) - \log \pi_{\text{ref}}(y|x)\right).$$
Plugging this implicit reward into the BT model in Eq.~\eqref{eq: bl-prob}, the preference probability becomes a function of the policy $\pi_\theta$:
$$\mathbb{P}(y_i \succ y_j|x) = \sigma\left(\hat{r}_\theta(x, y_i) - \hat{r}_\theta(x, y_j)\right) = \sigma\left(\beta\left[\log \frac{\pi_\theta(y_i|x)}{\pi_{\text{ref}}(y_i|x)} - \log \frac{\pi_\theta(y_j|x)}{\pi_{\text{ref}}(y_j|x)}\right]\right).$$
In the following part of this paper, we will use $p_\theta(x; y_i, y_j)$ to denote $\mathbb{P}(y_i \succ y_j|x)$ for simplicity of notation. DPO then learns the optimal policy by minimizing the negative log-likelihood of the preference data. For a dataset $\mathcal{D}$ of preference pairs $(x, y_w, y_l)$, where $y_w$ is the winner over the loser $y_l$, the DPO loss function is defined as:
$$\mathcal{L}_{\text{DPO}}(\theta) = -\mathbb{E}_{(x,y_w,y_l) \sim \mathcal{D}}\left[\log p_\theta(x; y_w, y_l)\right] = -\mathbb{E}_{(x,y_w,y_l) \sim \mathcal{D}}\left[\log \sigma\left(\hat{r}_\theta(x, y_w) - \hat{r}_\theta(x, y_l)\right)\right].$$

Minimizing this loss effectively increases the probability of generating preferred responses over less preferred ones, while simultaneously maintaining proximity to the reference model through an implicit regularization mechanism.

\vspace{-0.2cm}
\paragraph{Preference Variance (PVar).}
To quantify the utility of a prompt for DPO, we introduce Preference Variance (PVar). We focus on the variance of preference probabilities, rather than the variance of rewards, because the DPO loss is a function of reward *differences*, not absolute reward values. A prompt could generate responses with high but very similar rewards (low PVar, but potentially high reward variance), providing a weak learning signal for DPO. We now define a key metric in our work: the variance in preference probabilities for a given prompt. For a fixed prompt $x$, we are interested in characterizing how consistently the model expresses its preferences across different response pairs. Formally, we consider the random variable $P_x$ representing the model's preference strength when comparing two responses sampled from its own distribution: $P_x = p_{\theta}(x; y_i, y_j)$ where $y_i, y_j \sim \pi_{\theta}(\cdot \mid x)$ are independently sampled responses. The Preference Variance (PVar) for prompt $x$ under model $\theta$ is defined as:
\vspace{-0.1cm}
$$
\text{PVar}_\theta[x] = \operatorname{Var}_{y_i, y_j \sim \pi_\theta(\cdot \mid x)}\left[p_\theta\left(x; y_i, y_j\right)\right].
$$
This metric quantifies the variability in the model's preference judgments across different pairs of candidate responses. A low PVar indicates that the model has similar preference strengths for most response pairs, suggesting a relatively flat preference landscape (i.e., low PVar). Conversely, a high PVar suggests the model has strong and differentiated preferences among its generated responses, with some pairs exhibiting much stronger preference signals than others. PVar directly measures the variability in these critical reward differences as captured by the sigmoid function, thus offering a more direct link to the DPO objective's optimization landscape.
\vspace{-0.2cm}
\paragraph{Estimating PVar in Practice.}
In practical implementations, we use the Monte Carlo method \citep{wasserman2013all} to estimate the empirical PVar by generating $n$ response samples $\{y_1, y_2, \ldots, y_n\}$ from an initial policy $\pi_{\theta_0}(\cdot \mid x)$ and computing pairwise preference probabilities. Furthermore, while the preference probability $p_\theta(x; y_i, y_j)$ is defined using an implicit reward function that changes as the policy updates, directly using these implicit rewards for sample selection is impractical. Instead, we employ a fixed, external reward model $r_\phi(x, y)$ as a stable estimator for the preference signals. This approach allows us to compute the estimated preference probability:
$$
\hat{p}(x; y_i, y_j) = \sigma(r_\phi(x, y_i) - r_\phi(x, y_j)).
$$
The estimated PVar is then computed as:
\vspace{-0.1cm}
\begin{equation}\label{eq: psd-estimate}
    \widehat{\text{PVar}}[x] = \frac{1}{{n(n-1)}}\sum_{i\neq j}\left(\hat{p}(x; y_i, y_j) - \bar{p}\right)^2,
\end{equation}
where $\bar{p}$ is the mean of all pairwise preference probabilities in the sample. By symmetry, we have $\hat{p}(x; y_i, y_j)+\hat{p}(x; y_j, y_i)=1$, thus $\bar{p}=\frac{1}{2}$. This estimation allows us to quantify the variability in model preferences for offline data selection. It is important to note that the reliability of this PVar estimation is directly dependent on the quality of the external reward model. However, our experiments in Section 5.2 (Table~\ref{tab:robustness}) demonstrate that PVar-based selection remains a robust criterion, consistently outperforming a reward-gap baseline even when guided by smaller, less powerful reward models (e.g., 1B and 3B parameters). Our theoretical analysis in Section~\ref{sec:theory} will formally bridge the gap between this practical offline estimation and the online training dynamics.
\vspace{-0.2cm}
\section{Theoretical Analysis of DPO Gradient and PVar}
\label{sec:theory}
\vspace{-0.2cm}
In this section, we analyze how PVar affects DPO training dynamics. We first present a foundational result (Theorem~\ref{thm:dpo_online}) showing that the online DPO gradient for a given prompt is upper-bounded by its online PVar. We then introduce a new bridging theorem (Theorem~\ref{thm:dpo_offline}) that connects this online gradient to the practical, offline PVar estimated with an external reward model, providing a solid theoretical foundation for our data selection strategy.
\vspace{-0.2cm}
\paragraph{Gradient of DPO Loss.}
Let $\hat{r}_\theta(x, y) = \beta(\log \pi_\theta(y|x) - \log \pi_{\text{ref}}(y|x))$ be the implicit reward. The DPO loss gradient for a given prompt $x$ and a single preference pair $(y_w, y_l)$ can be expressed as:
$$
\nabla_\theta \mathcal{L}_{\text{DPO}} = 
- (1-\sigma(\hat{r}_\theta(x, y_w) - \hat{r}_\theta(x, y_l)))
\cdot \beta \left[\nabla_\theta \log \pi_\theta(y_w | x) - 
\nabla_\theta \log \pi_\theta(y_l | x)
\right].
$$
Taking the expectation over preference pairs from the dataset $\mathcal{D}(x)$ for a fixed prompt $x$ gives the full gradient for that prompt.
\vspace{-0.2cm}
\paragraph{Online Gradient Bound via Online PVar.}
We first establish a formal relationship between the magnitude of the DPO gradient and the PVar, both evaluated with respect to the current policy $\pi_\theta$. This result, inspired by \citep{razin2023vanishing}, shows that prompts with small PVar cannot produce large gradient updates. To formalize this, let $|y|$ be the maximum response length and let $\gamma(x; \theta)$ be an upper bound on the Jacobian norm of the model's logit function with respect to its parameters. We define the term $C(x, \theta) := 8 \beta |y| \gamma(x; \theta)$, which depends on the model's properties.

\begin{theorem}[PVar Bounds the DPO Gradient]\label{thm:dpo_online}
For parameters $\theta \in \mathbb{R}^p$ and a specific input $x \in \mathcal{X}$, the norm of the DPO loss gradient is upper bounded by:
$$
\left\|\nabla_\theta \mathcal{L}_{\mathrm{DPO}}\left(\pi_\theta , \pi_{\mathrm{ref}}; x\right)\right\| \leq C(x, \theta) \cdot \text{PVar}_\theta[x]^{1/3},
$$
where $\text{PVar}_\theta[x]$ is the online preference variance for prompt $x$ computed with the policy $\pi_\theta$, and $C(x, \theta)$ is a term dependent on the model's Jacobian norm and response length, defined above.
\end{theorem}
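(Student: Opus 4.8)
The plan is to follow the ``low variance $\Rightarrow$ small gradient'' template of Razin et al., adapted to the sigmoid/Bradley--Terry structure of DPO: rewrite the per-prompt gradient as an expectation of a scalar preference weight times a score-difference vector, bound the vector part by a uniform smoothness estimate, recenter the scalar part so that what remains is exactly the fluctuation measured by $\text{PVar}_\theta[x]$, and close with Cauchy--Schwarz (or a short truncation argument to surface the $1/3$ power literally).

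Concretely, with $\Phi(y):=\nabla_\theta\log\pi_\theta(y\mid x)$, write
\[
\nabla_\theta \mathcal{L}_{\mathrm{DPO}}(\pi_\theta,\pi_{\mathrm{ref}};x)=-\beta\,\mathbb{E}_{(y_w,y_l)}\big[(1-p_\theta(x;y_w,y_l))\,(\Phi(y_w)-\Phi(y_l))\big],
\]
the expectation being over the on-policy preference pairs feeding the update at $\pi_\theta$. The first ingredient is a uniform bound on $\|\Phi(y_w)-\Phi(y_l)\|$. Since $\log\pi_\theta(y\mid x)=\sum_{t\le|y|}\log\mathrm{softmax}(\ell_\theta(x,y_{<t}))_{y_t}$, the chain rule expresses $\Phi(y)$ as a sum over at most $|y|$ token positions of a logit-Jacobian transpose (operator norm $\le\gamma(x;\theta)$) applied to a log-softmax gradient (Euclidean norm $\le 2$), so $\|\Phi(y)\|\le 2|y|\gamma(x;\theta)$ and hence $\|\Phi(y_w)-\Phi(y_l)\|\le 4|y|\gamma(x;\theta)$; this supplies the $\beta|y|\gamma(x;\theta)$ factor of $C(x,\theta)$.

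The crucial step is recentering the weight. Naively $|1-p_\theta(x;y_w,y_l)|\le 1$ only yields an $O(\beta|y|\gamma)$ bound, because $1-p_\theta\approx\tfrac12$ exactly when $\text{PVar}_\theta[x]$ is small. But $1-p_\theta(x;y_w,y_l)$ has mean $\tfrac12$ (on-policy sampling together with the symmetry $p_\theta(x;y,y')+p_\theta(x;y',y)=1$), while $\Phi(y_w)-\Phi(y_l)$ has mean zero (score-function identity, each of $y_w,y_l$ being marginally $\pi_\theta(\cdot\mid x)$), so subtracting $\tfrac12$ from the weight leaves the expectation unchanged:
\[
\nabla_\theta \mathcal{L}_{\mathrm{DPO}}(\pi_\theta,\pi_{\mathrm{ref}};x)=\beta\,\mathbb{E}_{(y_w,y_l)}\big[(p_\theta(x;y_w,y_l)-\tfrac12)\,(\Phi(y_w)-\Phi(y_l))\big].
\]
Taking norms, pulling out the uniform bound on the vector part, and applying Cauchy--Schwarz gives $\big\|\nabla_\theta \mathcal{L}_{\mathrm{DPO}}(\pi_\theta,\pi_{\mathrm{ref}};x)\big\|\le 4\beta|y|\gamma(x;\theta)\,\mathbb{E}\big[|p_\theta(x;y_w,y_l)-\tfrac12|\big]\le 4\beta|y|\gamma(x;\theta)\,\sqrt{\text{PVar}_\theta[x]}$, since $\mathbb{E}[|p_\theta(x;y_w,y_l)-\tfrac12|^2]=\text{PVar}_\theta[x]$ (the deviation from $\tfrac12$ depends only on the unordered pair, which is i.i.d.\ $\pi_\theta\times\pi_\theta$). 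As $\text{PVar}_\theta[x]\le\tfrac14\le1$ we have $\sqrt{\text{PVar}_\theta[x]}\le\text{PVar}_\theta[x]^{1/3}$, giving the claim with room to spare in the constant; to obtain the exponent $1/3$ ``natively'' instead, one may replace Cauchy--Schwarz by splitting on $\{|p_\theta-\tfrac12|>\varepsilon\}$, using $|p_\theta-\tfrac12|\le\tfrac12$ on the complement and Markov $\mathbb{P}(|p_\theta-\tfrac12|>\varepsilon)\le\text{PVar}_\theta[x]/\varepsilon^2$ on the event, then optimizing at $\varepsilon=\text{PVar}_\theta[x]^{1/3}$ to get $\mathbb{E}|p_\theta-\tfrac12|\le\tfrac32\text{PVar}_\theta[x]^{1/3}$.

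I expect the main obstacle to be making the recentering rigorous: one must pin down the on-policy sampling/labeling convention carefully so that $\mathbb{E}[\Phi(y_w)-\Phi(y_l)]=0$ really holds (this is exactly where ``both responses are drawn from $\pi_\theta$'' and the Bradley--Terry symmetry enter), and then track how the constant $8$ in $C(x,\theta)=8\beta|y|\gamma(x;\theta)$ assembles from the per-token log-softmax gradient bound, the factor $2$ coming from $\Phi(y_w)-\Phi(y_l)$, and the slack between $\sqrt{\text{PVar}}$ and $\text{PVar}^{1/3}$. The remaining pieces — the gradient identity, the Jacobian/chain-rule bound, and the final Cauchy--Schwarz (or Markov) step — are routine.
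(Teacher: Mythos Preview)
Your proposal is correct. Under the paper's on-policy convention---$y_w,y_l$ drawn i.i.d.\ from $\pi_\theta(\cdot\mid x)$---the score-function identity makes your recentering step rigorous, and your chain-rule bound $\|\Phi(y_w)-\Phi(y_l)\|\le 4|y|\gamma(x;\theta)$ matches what the paper uses.

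Your primary route differs from the paper's and is in fact sharper. The paper proceeds along the lines of your ``native $1/3$'' alternative: it fixes a threshold $c$, defines a clipped preference $\tilde p_\theta$ equal to $1/2$ on $\{|p_\theta-\tfrac12|>c\}$, splits the gradient into a $\tilde p_\theta$-part bounded by $4\beta c|y|\gamma$ (the recentering there is implicit, deferred to a citation of Razin et al.) and a tail part bounded via Chebyshev by $4\beta|y|\gamma\cdot\text{PVar}_\theta[x]/c^2$, and then optimizes $c$ to obtain the $1/3$ exponent with constant just under $8$. Your Cauchy--Schwarz argument is more elementary and delivers $4\beta|y|\gamma(x;\theta)\sqrt{\text{PVar}_\theta[x]}$---a strictly stronger bound with a smaller constant---which you then correctly relax to $\text{PVar}_\theta[x]^{1/3}$; the paper does not record this $\sqrt{\cdot}$ improvement. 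One minor slip in your truncation sketch: the crude bound $|p_\theta-\tfrac12|\le\tfrac12$ belongs on the event $\{|p_\theta-\tfrac12|>\varepsilon\}$, not its complement (where one uses $\le\varepsilon$); your final estimate $\tfrac32\,\text{PVar}_\theta[x]^{1/3}$ is nonetheless correct.
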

\vspace{-0.2cm}
\paragraph{Proof Sketch for Theorem \ref{thm:dpo_online}.} The proof involves decomposing the gradient. We partition response pairs into two sets: those with preference probabilities close to $1/2$ and those with more extreme preferences. The contribution from the first set is bounded by a threshold parameter $c$. The probability mass of the second set is bounded by $\text{PVar}_\theta[x]/c^2$ via Chebyshev's inequality. Combining these bounds yields an expression of the form $K_1 \cdot c + K_2 \cdot \text{PVar}_\theta[x]/c^2$. Optimizing for $c$ gives the final $\text{PVar}^{1/3}$ dependency. The full proof is in Appendix \ref{app:theory}.
\vspace{-0.2cm}
\paragraph{Bridging Offline Selection and Online Dynamics.}
Theorem~\ref{thm:dpo_online} links the online gradient to online PVar, but in practice we select data using an offline PVar calculated with a fixed reward model $r_\phi$ and an initial policy $\pi_{\theta_0}$. The following theorem bridges this gap by bounding the online gradient norm for a specific prompt $x$ using its practical, offline PVar plus several interpretable error terms.

\begin{theorem}[Offline-to-Online Gradient Bound]\label{thm:dpo_offline}
Let $\widehat{\text{PVar}}_{\phi, \theta_0}[x]$ be the offline PVar for a specific prompt $x$, estimated using a reward model $r_\phi$ and an initial policy $\pi_{\theta_0}$. The online DPO gradient norm for that prompt is bounded as:
\vspace{-0.1cm}
$$
\left\|\nabla_\theta \mathcal{L}_{\mathrm{DPO}}\left(\pi_\theta , \pi_{\mathrm{ref}}; x\right)\right\| \leq C(x, \theta) \cdot \Big(\widehat{\text{PVar}}_{\phi, \theta_0}[x] + \Xi(x; \theta, \phi)\Big)^{1/3},
$$
where $C(x, \theta)$ is the same constant as in Theorem \ref{thm:dpo_online}, and $\Xi(x; \theta, \phi)$ is a prompt-specific error term defined as:
\vspace{-0.1cm}
\begin{align*}
\Xi(x; \theta, \phi) &= \underbrace{2\sup_{y}|\hat{r}_\theta(x,y) - r_\phi(x,y)|}_{\substack{\text{Policy-Reward}\\\text{Disagreement}}} + \underbrace{2\sup_{y}|r_\phi(x,y) - r^*(x,y)|}_{\substack{\text{Reward Model}\\\text{Error}}} \\
&\qquad\quad + \underbrace{6\,\mathrm{TV}(\pi_\theta(\cdot|x) \otimes \pi_\theta(\cdot|x), \pi_{\theta_0}(\cdot|x) \otimes \pi_{\theta_0}(\cdot|x))}_{\substack{\text{Policy Distribution}\\\text{Shift}}} .
\end{align*}
The error term $\Xi$ consists of three components for a given $x$: (1) the disagreement between the policy's implicit reward and the external reward model, (2) the error of the reward model with respect to a ground-truth reward $r^*$, and (3) the distribution shift of the policy from its initial state $\pi_{\theta_0}$ to the current state $\pi_\theta$.
\end{theorem}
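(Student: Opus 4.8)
The plan is to derive the bound from Theorem~\ref{thm:dpo_online} by swapping the online preference variance for its offline surrogate, paying for the swap with the error term $\Xi$. Because $t\mapsto t^{1/3}$ is nondecreasing on $[0,\infty)$ and every summand of $\Xi(x;\theta,\phi)$ is nonnegative, it is enough to prove the pointwise variance inequality
$$\text{PVar}_\theta[x]\;\le\;\widehat{\text{PVar}}_{\phi,\theta_0}[x]+\Xi(x;\theta,\phi),$$
since then $\|\nabla_\theta\mathcal{L}_{\mathrm{DPO}}\|\le C(x,\theta)\,\text{PVar}_\theta[x]^{1/3}\le C(x,\theta)\big(\widehat{\text{PVar}}_{\phi,\theta_0}[x]+\Xi\big)^{1/3}$. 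Fix the prompt $x$; write $P_\theta(y,y'):=\sigma(\hat r_\theta(x,y)-\hat r_\theta(x,y'))$, $P_\phi(y,y'):=\sigma(r_\phi(x,y)-r_\phi(x,y'))$, $P_*(y,y'):=\sigma(r^*(x,y)-r^*(x,y'))$, and $\mu:=\pi_\theta(\cdot|x)\otimes\pi_\theta(\cdot|x)$, $\mu_0:=\pi_{\theta_0}(\cdot|x)\otimes\pi_{\theta_0}(\cdot|x)$, so that $\text{PVar}_\theta[x]=\operatorname{Var}_\mu[P_\theta]$ and, at the population level, $\widehat{\text{PVar}}_{\phi,\theta_0}[x]=\operatorname{Var}_{\mu_0}[P_\phi]$. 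I will use repeatedly that $P(y,y')+P(y',y)=1$, so each such preference function has mean exactly $1/2$ under a symmetric product law.

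The proof then runs through a three-link perturbation chain built on two elementary facts about the variance of a $[0,1]$-valued quantity. (a) \emph{Integrand swap:} $|\operatorname{Var}[A]-\operatorname{Var}[B]|\le|\mathbb{E}[(A-B)(A+B)]|+|\mathbb{E}A-\mathbb{E}B|\,|\mathbb{E}A+\mathbb{E}B|\le 4\|A-B\|_\infty$, which, together with the $\tfrac14$-Lipschitzness of $\sigma$ and $|\sigma(r(x,y)-r(x,y'))-\sigma(r'(x,y)-r'(x,y'))|\le\tfrac14\big(|r(x,y)-r'(x,y)|+|r(x,y')-r'(x,y')|\big)$, converts a sup-norm gap between two reward functions into a variance gap. (b) \emph{Law swap:} for $h\in[0,1]$, $|\operatorname{Var}_\mu[h]-\operatorname{Var}_\nu[h]|\le|\mathbb{E}_\mu[h^2]-\mathbb{E}_\nu[h^2]|+|(\mathbb{E}_\mu h)^2-(\mathbb{E}_\nu h)^2|$, each piece bounded via $|\mathbb{E}_\mu f-\mathbb{E}_\nu f|\le\|f\|_\infty\cdot 2\,\mathrm{TV}(\mu,\nu)$ in the paper's normalization. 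The chain is: (i) $P_\theta\to P_*$ under $\mu$, bounding $\sup_y|\hat r_\theta(x,y)-r^*(x,y)|\le\sup_y|\hat r_\theta(x,y)-r_\phi(x,y)|+\sup_y|r_\phi(x,y)-r^*(x,y)|$ by the triangle inequality — this is where the Policy--Reward Disagreement and Reward Model Error terms enter; (ii) $\mu\to\mu_0$ with integrand $P_*$ fixed — this produces the Policy Distribution Shift term on the product TV; (iii) $P_*\to P_\phi$ under $\mu_0$ — this again feeds the Reward Model Error term. Adding the three increments to $\operatorname{Var}_{\mu_0}[P_\phi]=\widehat{\text{PVar}}_{\phi,\theta_0}[x]$ and carrying through the constants (the mean-$1/2$ identity removes the first-moment contributions in the integrand swaps, while they are kept in the law swap) yields the coefficients $2$, $2$, $6$ of the statement.

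I expect the law-swap step (ii) to be the crux: the variance is a nonlinear functional of the distribution on the product space $\mathcal{Y}\times\mathcal{Y}$, so one has to bound $|\mathbb{E}_\mu[P_*^2]-\mathbb{E}_{\mu_0}[P_*^2]|$ with care — this term is what fixes the coefficient $6$ — and either keep the product total variation $\mathrm{TV}(\pi_\theta\otimes\pi_\theta,\pi_{\theta_0}\otimes\pi_{\theta_0})$ as the primitive quantity, as the statement does, or reduce it to $2\,\mathrm{TV}(\pi_\theta(\cdot|x),\pi_{\theta_0}(\cdot|x))$ by the standard product bound. One further, routine point: the displayed bound uses the population offline PVar (the mean of the estimator in \eqref{eq: psd-estimate}, which is unbiased since $\bar p=\tfrac12$); if one wants the finite-$n$ estimator itself, a bounded-differences / $U$-statistic concentration inequality adds an $O(n^{-1/2})$ term without changing the structure of the bound.
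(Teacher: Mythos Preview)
Your proposal is correct and follows essentially the same route as the paper: both reduce via Theorem~\ref{thm:dpo_online} to the variance inequality $\text{PVar}_\theta[x]\le\widehat{\text{PVar}}_{\phi,\theta_0}[x]+\Xi$ and establish it by the identical three-link chain through the ground-truth anchor $r^*$, using an integrand-swap bound (the paper's Lemma~\ref{lem:pvar_change}, yielding the factor $2\sup_y|\cdot|$) for steps (i) and (iii) and a law-swap bound (the paper's Lemma~\ref{lem:var_change}, yielding $6\,\mathrm{TV}$) for step (ii). One bookkeeping caveat you should be aware of: since your steps (i) and (iii) each contribute a $\sup_y|r_\phi-r^*|$ term after the triangle inequality, the chain as written actually produces coefficient $4$ rather than $2$ on the Reward Model Error---the paper's appendix has exactly the same slip, so your argument is no less tight than theirs.
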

\vspace{-0.2cm}
\paragraph{Proof Sketch for Theorem \ref{thm:dpo_offline}.}
The proof leverages a triangle inequality argument. For a prompt $x$, we first relate its online PVar, $\text{PVar}_\theta[x]$, to its offline PVar, $\widehat{\text{PVar}}_{\phi, \theta_0}[x]$, by introducing the PVar of the (unobserved) ground-truth reward $r^*$ as an intermediate anchor. The difference between any two PVar terms can be bounded by the maximum difference between their underlying reward functions (over responses $y$) and the total variation distance between their sampling distributions for that $x$. Summing these bounds yields a relationship of the form $\text{PVar}_\theta[x] \le \widehat{\text{PVar}}_{\phi, \theta_0}[x] + \Xi(x; \theta, \phi)$. Substituting this into the bound from Theorem~\ref{thm:dpo_online} yields the final result. The full proof is in Appendix \ref{app:theory_bridge}.
\vspace{-0.2cm}
\paragraph{Practical Implications and Discussion.}
Theorem~\ref{thm:dpo_online} provides the core intuition: low PVar implies small gradients. Theorem~\ref{thm:dpo_offline} provides the formal justification for our practical data selection method. It shows that for any given prompt $x$, selecting it based on a high offline PVar (`$\widehat{\text{PVar}}_{\phi, \theta_0}[x]$`) is effective because this term directly contributes to a higher upper bound on its own online training gradient. This justifies why filtering for high PVar prompts leads to more impactful training updates on average.

Our approach's effectiveness assumes the PVar signal is not dominated by error terms. The DPO objective inherently constrains policy drift through its implicit KL-divergence penalty, which, by Pinsker's inequality, also bounds the policy distribution shift. Our strong empirical results suggest these error terms are sufficiently controlled in practice, making offline PVar an effective proxy for identifying information-rich data.
\vspace{-0.2cm}
\section{Experiment}\label{sec:experiment}
\vspace{-0.2cm}
We design experiments to answer two questions: (1) \textit{Does prompt-level preference variance correlate with gradient magnitudes in actual DPO training?} (2) \textit{Does leveraging this variance lead to improved training outcomes or model performance?} To answer these questions, we conducted three sets of experiments. First, we analyzed the training loss by dividing the DPO training prompts into three distinct subsets based on their PVar (Top 50\%, Bottom 50\%, and Random 50\%), and comparing convergence rates and final loss values. Then, we evaluated model performance across different benchmarks (AlpacaEval 2.0 and Arena-Hard) using the same subset division strategy. Finally, to simulate real-world deployment scenarios, we train model with the top 10\% of human-annotated pairs from UltraFeedback (selected by PVar) and compare with the model training with the complete human-annotated dataset, demonstrating improvements in model performance with significantly fewer training examples. More experiments and analysis can be found at Appendix \ref{app:experiment}.

\vspace{-0.2cm}
\subsection{Setup}
\vspace{-0.2cm}
\paragraph{Models:} We use the following base models for fine-tuning LLMs: Mistral-7B-Instruct-v0.2, an instruction fine-tuned version of Mistral-7B-v0.2 \citep{jiang2023mistral7b}, and Llama-3.1-8B-Instruct, an instruction finetuned version of Llama-3.1-8B \citep{grattafiori2024llama}. We use the Skywork-Reward-Llama-3.1-8B-v0.2 \citep{liu2024skywork} as the reward model to estimate the PVar except for those in Table \ref{tab:robustness}. We use Llama-3.1-8B-Instruct in all of our experiments except for those in Table \ref{tab:performance_comparison}.

\vspace{-0.2cm}
\paragraph{Training Datasets:}
We use a diverse mix of datasets for training, including: 
\textbf{UltraFeedback}~\citep{cui2023ultrafeedback}, a large-scale dataset with 60K diverse prompts; 
\textbf{Chatbot Arena Conversations}~\citep{zheng2023judging}, containing 33K real-world user conversations; 
\textbf{HH-RLHF}~\citep{bai2022training}, a human preference dataset from Anthropic with over 160K comparisons focused on helpfulness and harmlessness; 
and \textbf{WebGPT}~\citep{nakano2021webgpt}, which consists of 20K question-answer pairs for fact-intensive, web-sourced tasks.

\vspace{-0.2cm}
\paragraph{Benchmarks:} Following previous works \citep{meng2024simpo, deng2025less, wu2024self, chen2025compo}, we use \textbf{AlpacaEval 2.0} \citep{dubois2024length} and \textbf{Arena-Hard }\citep{li2024crowdsourced, arenahard2024} as our evaluation benchmarks. More information for these benchmarks can be found at Appendix \ref{app:experiment}.
\vspace{-0.2cm}

\paragraph{Implementation Details:} We employed the AdamW optimizer with standard parameters. Key hyperparameters for DPO training include a learning rate of $5 \times 10^{-7}$ with a cosine schedule and 0.1 warmup ratio, a global batch size of 32, and a DPO $\beta$ of 0.1. All models were trained for two epochs. A comprehensive list of all hyperparameters and generation settings can be found in Appendix \ref{app:experiment}.
\vspace{-0.2cm}
\subsection{Main Results}
\begin{figure}[t]
\captionsetup{belowskip=-15pt}
    \centering
    \begin{minipage}{0.31\linewidth}
        \centering
        \includegraphics[width=\linewidth]{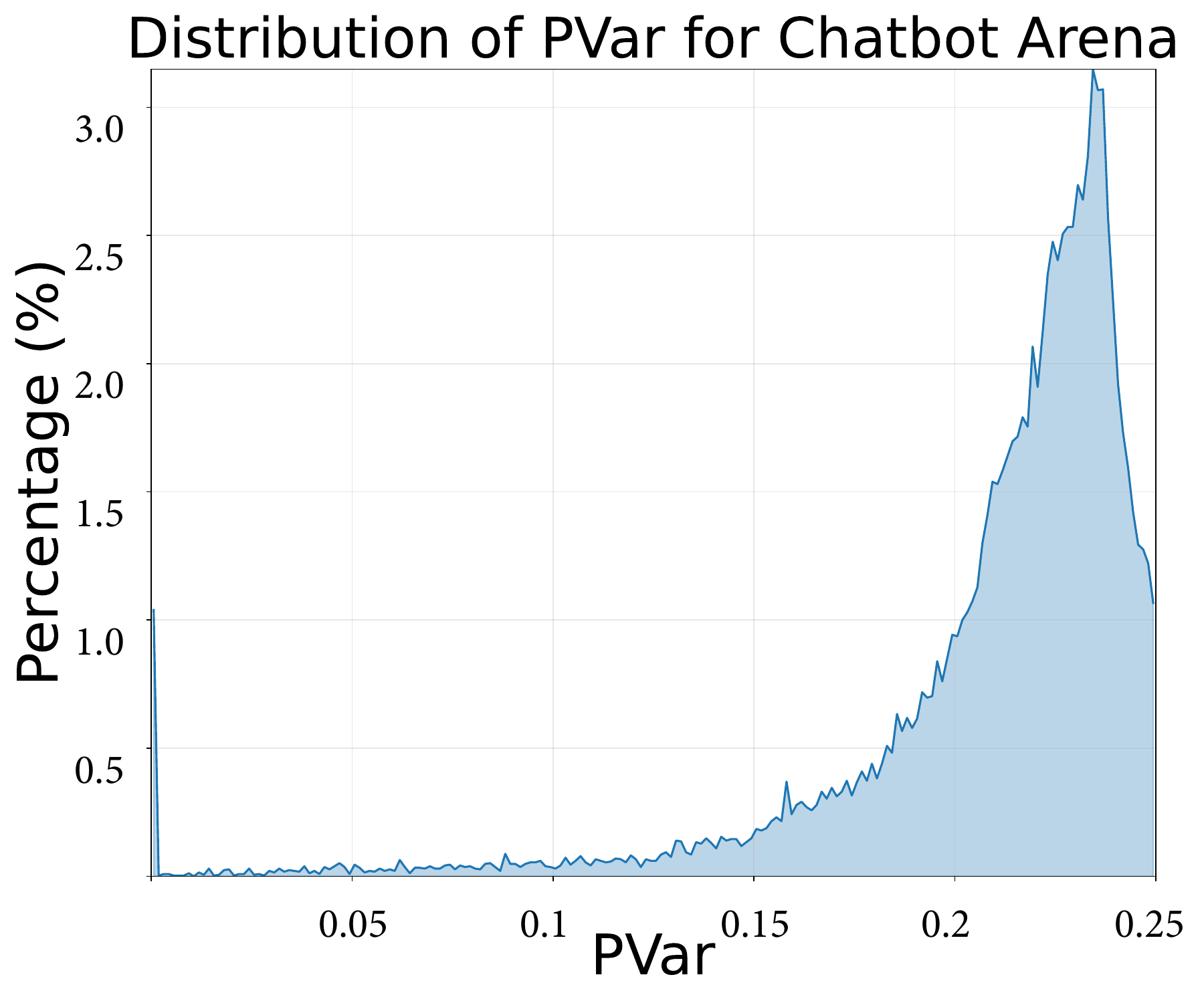}
    \end{minipage}
    \hfill
    \begin{minipage}{0.31\linewidth}
        \centering
        \includegraphics[width=\linewidth]{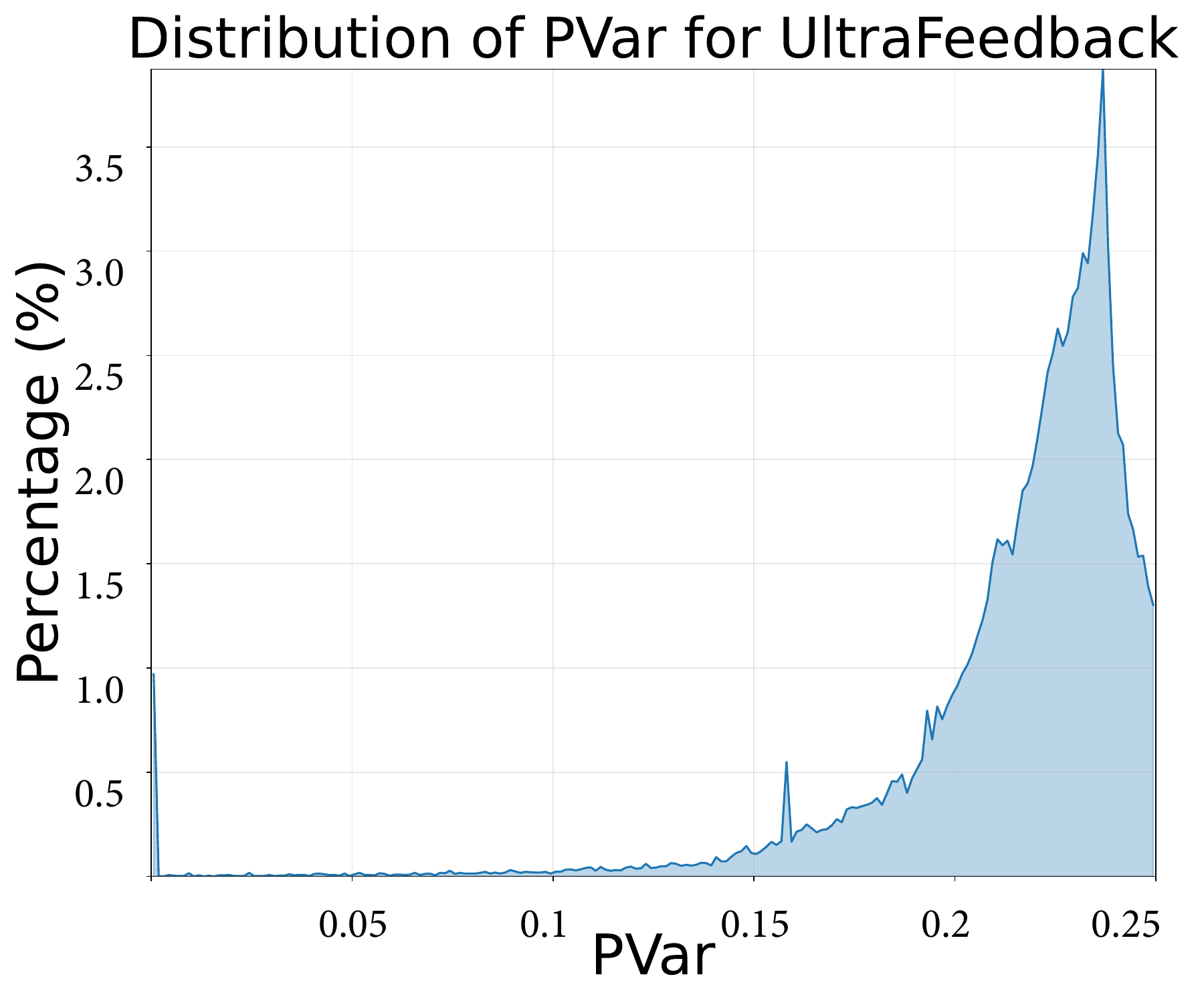}
    \end{minipage}
    \hfill
    \begin{minipage}{0.31\linewidth}
        \centering
        \includegraphics[width=\linewidth]{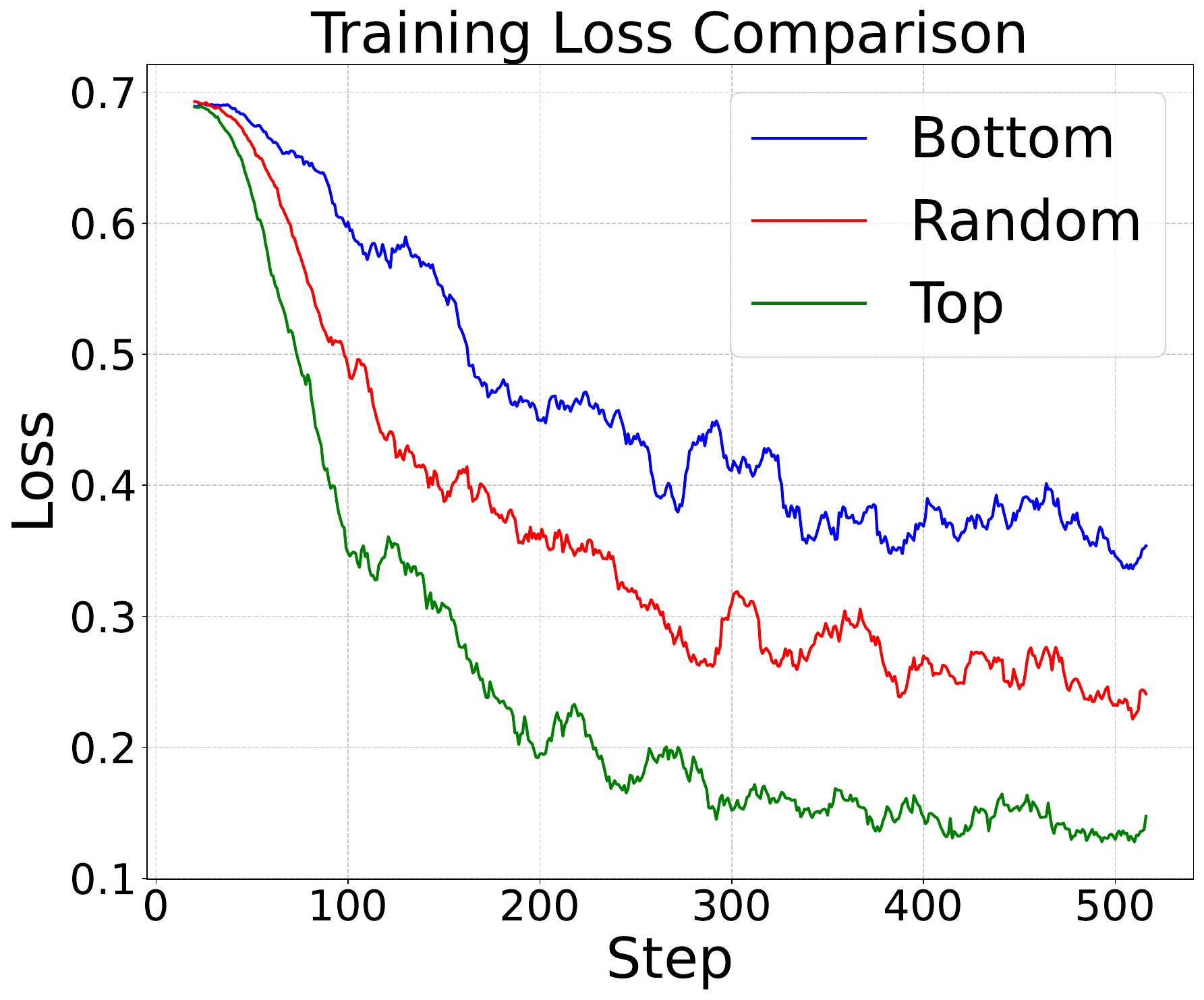}
    \end{minipage}
    \caption{Left and middle: Distribution of Preference Variance (PVar) across prompts from two different datasets: Chatbot Arena Conversation and UltraFeedback. Each PVar is calculated using 5 responses generated by Llama 3.1-8B-Instruct. The distributions show a wide spread of PVar values, indicating significant variation in the informativeness of prompts, with a substantial portion exhibiting moderate-to-high PVar values. Right: Training loss curves for models fine-tuned on different data subsets selected based on PVar. The Top 50\% subset (green) demonstrates faster convergence and reaches a lower final loss compared to Random 50\% (red) and Bottom 50\% (blue) selections, indicating more efficient learning from high-PVar training examples.}
    \label{fig:combined_plots}
\end{figure}

\vspace{-0.2cm}
\paragraph{PVar Distribution Analysis.}
We analyze the distribution of PVar across prompts in the left and middle panels of Figure~\ref{fig:combined_plots}. For each prompt in both datasets, we generate 5 responses and calculate the empirical PVar. The distributions from both datasets reveal similar patterns. We observe a wide range of PVar values (from near 0 to the maximum of 0.25), indicating significant variation in the strength of preference signals across prompts. Both distributions show that a substantial portion of prompts have relatively high PVar, suggesting that many examples in these datasets are informative for DPO. This consistency across different prompt collections indicates that our PVar-based analysis is robust.
\vspace{-0.2cm}
\paragraph{Training Loss Analysis.} The right panel Figure~\ref{fig:combined_plots} illustrates the training loss curves for models fine-tuned on different subsets of data selected using PVar. We divided the training data into three distinct subsets based on their PVar: Top 50\% (highest PVar), Bottom 50\% (lowest PVar), and Random 50\% (randomly selected examples), then separately trained models on each of these subsets. The visualization reveals distinct learning patterns. Models trained on the Top 50\% subset demonstrate noticeably faster loss reduction and ultimately converge to a lower final loss value. In contrast, models trained on the Bottom 50\% subset exhibit the slowest convergence rate and settle at a higher final loss. The Random 50\% selection shows intermediate performance. These observations align with our theoretical analysis, providing evidence that prioritizing high-PVar examples leads to more efficient learning.

\begin{table}[htbp]
\centering
\begin{tabular}{llcccc}
\toprule
\multirow{2}{*}{\textbf{Base Model}} & \multirow{2}{*}{\textbf{Training Set}} & \multirow{2}{*}{\textbf{Selection}} & \multicolumn{2}{c}{\textbf{AlpacaEval 2.0}} & \textbf{Arena-Hard} \\
\cmidrule{4-6}
& & & \textbf{LC (\%)} & \textbf{WR (\%)} & \textbf{WR (\%)} \\
\midrule
\multirow{6}{*}{Llama 3.1-8B-Instruct} & \multirow{3}{*}{\centering UltraFeedback}
& Top    & \textbf{36.2} & \textbf{40.9} & \textbf{32.2} \\
& & Random & 34.9 & 39.3 & 31.0 \\
& & Bottom & 34.8 & 38.6 & 30.7 \\
\cmidrule{2-6}
& \multirow{3}{*}{\centering Chatbot Arena}
& Top    & \textbf{36.2} & \textbf{39.6} & \textbf{30.0} \\
& & Random & 33.6 & 39.3 & 28.8 \\
& & Bottom & 32.9 & 37.5 & 29.2 \\
\midrule
\multirow{6}{*}{Mistral-7B-Instruct-v0.2} & \multirow{3}{*}{\centering UltraFeedback}
& Top    & \textbf{31.2} & \textbf{36.1} & \textbf{19.6} \\
& & Random & 31.0 & \textbf{36.1} & 17.7 \\
& & Bottom & 30.1 & 35.1 & 18.8 \\
\cmidrule{2-6}
& \multirow{3}{*}{\centering Chatbot Arena}
& Top    & \textbf{32.5} & \textbf{34.5}  & \textbf{20.4} \\
& & Random & 29.1 & 31.6 & 18.0 \\
& & Bottom & 28.2 & 30.7 & 16.7 \\
\bottomrule
\end{tabular}
\caption{Performance comparison of different prompt selection strategies. We partition each dataset into three segments (Top 50\%, Random 50\%, and Bottom 50\%) based on PVar. For these experiments, winning and losing responses were determined by selecting generated responses with the highest and lowest scores from our reward model. Results show that training on top-ranked prompts consistently outperforms random and bottom selection across different base models and datasets. Bold numbers indicate the best performance within each model-dataset combination.}\label{tab:performance_comparison}
\end{table}
\vspace{-0.2cm}
\paragraph{Analysis on Different Benchmarks.} Following our previous methodology, we divide the data into three subsets: Top, Random, and Bottom based on PVar, each containing 50\% of the prompts, and then separately trained models on each. We use responses with the highest and lowest rewards (given by Skywork-Reward-Llama-3.1-8B-v0.2) as the chosen and rejected responses, respectively. Table~\ref{tab:performance_comparison} presents a comprehensive evaluation across different base models and training datasets. The results demonstrate a consistent trend: training with top-PVar prompts yields performance that is consistently as good as or better than random selection, and often superior, particularly on the length-controlled win rate metric. Specifically, for Llama 3.1-8B-Instruct trained on UltraFeedback, the Top 50\% selection achieves a 36.2\% length-controlled win rate on AlpacaEval 2.0, outperforming both random (34.9\%) and bottom (34.8\%) selection. This trend holds across models, datasets, and evaluation metrics, providing compelling evidence that intelligent prompt selection can enhance model performance.

\begin{table}[ht]
\captionsetup{belowskip=-12pt}
\centering
\resizebox{\columnwidth}{!}{%
\begin{tabular}{lllcc}
\toprule
\textbf{Dataset} & \textbf{Skywork-Reward Model} & \textbf{Selection Strategy} & \textbf{Win Rate (\%)} & \textbf{LC Win Rate (\%)} \\
\midrule
\multirow{9}{*}{HH-RLHF} & \multirow{3}{*}{Llama-3.1-8B-v0.2} 
& PVar Top & \textbf{40.9} & \textbf{35.1} \\
& & Reward Gap Top & 38.9 & 34.7 \\
& & PVar Bottom & 38.0 & 35.0 \\
\cmidrule{2-5}
& \multirow{3}{*}{Llama-3.2-3B-v0.2} 
& PVar Top & \textbf{41.1} & \textbf{35.8} \\
& & Reward Gap Top & 39.1 & 33.7 \\
& & PVar Bottom & 39.0 & 34.9 \\
\cmidrule{2-5}
& \multirow{3}{*}{Llama-3.2-1B-v0.2} 
& PVar Top & \textbf{39.9} & \textbf{36.4} \\
& & Reward Gap Top & 38.9 & 35.3 \\
& & PVar Bottom & 38.2 & 33.8 \\
\midrule
\multirow{9}{*}{WebGPT} & \multirow{3}{*}{Llama-3.1-8B-v0.2} 
& PVar Top & \textbf{40.9} & \textbf{35.4} \\
& & Reward Gap Top & 39.5 & 34.9 \\
& & PVar Bottom & 37.8 & 33.7 \\
\cmidrule{2-5}
& \multirow{3}{*}{Llama-3.2-3B-v0.2} 
& PVar Top & \textbf{40.6} & \textbf{36.3} \\
& & Reward Gap Top & 38.9 & 34.6 \\
& & PVar Bottom & 38.2 & 33.6 \\
\cmidrule{2-5}
& \multirow{3}{*}{Llama-3.2-1B-v0.2} 
& PVar Top & \textbf{40.1} & \textbf{35.2} \\
& & Reward Gap Top & 38.6 & 34.3 \\
& & PVar Bottom & 38.1 & 34.1 \\
\bottomrule
\end{tabular}%
}
\caption{Robustness of selection strategies across different reward models. We train Llama-3.1-8B-Instruct on HH-RLHF and WebGPT datasets, using three different reward models to guide prompt selection. `PVar Top' consistently outperforms the `Reward Gap Top' baseline, demonstrating its robustness. Bold indicates the best result within each dataset-reward model group.}
\label{tab:robustness}
\end{table}
\vspace{-0.2cm}
\paragraph{PVar's Robustness: Outperforming the Reward Gap Across Varying Reward Model Quality.}
We demonstrate PVar's robustness by showing it consistently outperforms a common reward gap baseline using varios size of reward models. This baseline selects the top 50\% of prompts with the largest reward difference between any two responses \citep{deng2025less, cui2025crpo, khaki2024rs}. We evaluated both selection methods on HH-RLHF and WebGPT using three reward models of varying sizes (1B, 3B, and 8B). As detailed in Table~\ref{tab:robustness}, the PVar Top 50\% subset yields superior performance in all configurations. The performance gap widens significantly with the less reliable 1B reward model, confirming that PVar is a more stable indicator of learning value than reward gap, particularly in the presence of noisy reward signals.

\begin{table}[t]
\captionsetup{belowskip=-20pt}
\centering
\begin{tabular}{lccc}
\toprule
\multirow{2}{*}{\textbf{Model Configuration}} & \multicolumn{2}{c}{\textbf{AlpacaEval 2.0}} & \textbf{Arena-Hard} \\
\cmidrule{2-4}
& \textbf{LC (\%)} & \textbf{WR (\%)} & \textbf{WR (\%)} \\
\midrule
Llama 3.1-8B-Instruct (Base Model) & 24.8 & 24.2 & 21.3 \\
\midrule
+ DPO w. 10\% Human Data (Final) & \textbf{34.3} & \textbf{37.0} & \textbf{30.3} \\
\midrule
+ DPO w. 100\% Human Data (Peak Perf. @ 64.4\% Data) & 32.5 & 36.5 & 29.1 \\
+ DPO w. 100\% Human Data (Final Perf.) & 32.5 & 36.4 & 28.8 \\
\bottomrule
\end{tabular}
\caption{Impact of selective DPO training using high-PVar prompts from UltraFeedback with human annotations. Results show that training with only the top 10\% of prompts (selected via PVar) yields superior performance compared to using the full dataset, even when the full-dataset model has seen over six times more data. Bold indicates the best result.}
\label{tab:dpo_selection}
\end{table}
\vspace{-0.2cm}
\paragraph{Efficient DPO with Selected Human Annotations.}
To simulate practical deployment scenarios, this experiment uses the UltraFeedback dataset with its original human-annotated response pairs. We first identified the top 10\% of prompts with the highest PVar, then trained a model using only the human-labeled response pairs corresponding to these prompts. We compared this selective approach to training with all human-labeled pairs, evaluating the full-data model at checkpoints every 2000 steps. We trained both models for two full epochs over their respective datasets.

Table~\ref{tab:dpo_selection} shows a compelling result: training with just 10\% of the data achieves a final win rate of 37.0\% on AlpacaEval 2.0, which is significantly higher than the \emph{peak performance} (36.5\%) of the model trained on the full dataset. The full-data model reaches this peak after training on approximately 64.4\% of the data, meaning our method achieves a better final model with over six times less data. This supports our hypothesis that focusing on high-signal, high-PVar data leads to both a more efficient process and a better final model.
\vspace{-0.3cm}
\section{Conclusion}
\vspace{-0.3cm}
In this paper, we establish a direct link between Preference Variance (PVar) and the effectiveness of DPO training. Our theoretical analysis shows, and empirical results confirm, that high-PVar prompts generate larger gradient updates, leading to faster convergence and superior model performance. Across multiple models and datasets, training on high-PVar subsets consistently outperformed using random or low-PVar data. Most notably, we found that training on only the top 10\% of human-annotated high-PVar prompts surpassed training on the full dataset, demonstrating a path to significantly reduce annotation costs without sacrificing performance. This work provides a practical, theoretically grounded method for identifying high-value training data, enabling more efficient resource allocation in LLM alignment.

\bibliography{arxiv/ref}

\begin{thebibliography}{76}
\providecommand{\natexlab}[1]{#1}
\providecommand{\url}[1]{\texttt{#1}}
\expandafter\ifx\csname urlstyle\endcsname\relax
  \providecommand{\doi}[1]{doi: #1}\else
  \providecommand{\doi}{doi: \begingroup \urlstyle{rm}\Url}\fi

\bibitem[Ouyang et~al.(2022{\natexlab{a}})Ouyang, Wu, Jiang, Almeida, Wainwright, Mishkin, Zhang, Agarwal, Slama, Ray, et~al.]{ouyang2022training}
Long Ouyang, Jeffrey Wu, Xu~Jiang, Diogo Almeida, Carroll Wainwright, Pamela Mishkin, Chong Zhang, Sandhini Agarwal, Katarina Slama, Alex Ray, et~al.
\newblock Training language models to follow instructions with human feedback.
\newblock \emph{Advances in neural information processing systems}, 35:\penalty0 27730--27744, 2022{\natexlab{a}}.

\bibitem[Achiam et~al.(2023)Achiam, Adler, Agarwal, Ahmad, Akkaya, Aleman, Almeida, Altenschmidt, Altman, Anadkat, et~al.]{achiam2023gpt}
Josh Achiam, Steven Adler, Sandhini Agarwal, Lama Ahmad, Ilge Akkaya, Florencia~Leoni Aleman, Diogo Almeida, Janko Altenschmidt, Sam Altman, Shyamal Anadkat, et~al.
\newblock Gpt-4 technical report.
\newblock \emph{arXiv preprint arXiv:2303.08774}, 2023.

\bibitem[Kim et~al.(2025)Kim, il~Lee, Joo, Lee, Thonet, and Jung]{kim2025driftdecodingtimepersonalizedalignments}
Minbeom Kim, Kang il~Lee, Seongho Joo, Hwaran Lee, Thibaut Thonet, and Kyomin Jung.
\newblock Drift: Decoding-time personalized alignments with implicit user preferences, 2025.
\newblock URL \url{https://arxiv.org/abs/2502.14289}.

\bibitem[Shen et~al.(2023)Shen, Jin, Huang, Liu, Dong, Guo, Wu, Liu, and Xiong]{shen2023large}
Tianhao Shen, Renren Jin, Yufei Huang, Chuang Liu, Weilong Dong, Zishan Guo, Xinwei Wu, Yan Liu, and Deyi Xiong.
\newblock Large language model alignment: A survey.
\newblock \emph{arXiv preprint arXiv:2309.15025}, 2023.

\bibitem[Guo et~al.(2025)Guo, Wu, Qiu, Huang, Juan, Yang, and Wang]{guo2025temporal}
Jiacheng Guo, Yue Wu, Jiahao Qiu, Kaixuan Huang, Xinzhe Juan, Ling Yang, and Mengdi Wang.
\newblock Temporal consistency for llm reasoning process error identification.
\newblock \emph{arXiv preprint arXiv:2503.14495}, 2025.

\bibitem[Ji et~al.(2025)Ji, Wu, Wu, Wang, Yang, Dras, and Naseem]{ji2025survey}
Miaomiao Ji, Yanqiu Wu, Zhibin Wu, Shoujin Wang, Jian Yang, Mark Dras, and Usman Naseem.
\newblock A survey on progress in llm alignment from the perspective of reward design.
\newblock \emph{arXiv preprint arXiv:2505.02666}, 2025.

\bibitem[Christiano et~al.(2017)Christiano, Leike, Brown, Martic, Legg, and Amodei]{christiano2017deep}
Paul~F Christiano, Jan Leike, Tom Brown, Miljan Martic, Shane Legg, and Dario Amodei.
\newblock Deep reinforcement learning from human preferences.
\newblock \emph{Advances in neural information processing systems}, 30, 2017.

\bibitem[Ziegler et~al.(2019)Ziegler, Stiennon, Wu, Brown, Radford, Amodei, Christiano, and Irving]{ziegler2019fine}
Daniel~M Ziegler, Nisan Stiennon, Jeffrey Wu, Tom~B Brown, Alec Radford, Dario Amodei, Paul Christiano, and Geoffrey Irving.
\newblock Fine-tuning language models from human preferences.
\newblock \emph{arXiv preprint arXiv:1909.08593}, 2019.

\bibitem[Lee et~al.(2021)Lee, Smith, and Abbeel]{lee2021pebble}
Kimin Lee, Laura Smith, and Pieter Abbeel.
\newblock Pebble: Feedback-efficient interactive reinforcement learning via relabeling experience and unsupervised pre-training.
\newblock \emph{arXiv preprint arXiv:2106.05091}, 2021.

\bibitem[Huang et~al.(2024)Huang, Qu, Cousins, Johnson, Yin, Shah, Zhou, Altman, Wang, and Cong]{huang2024crispr}
Kaixuan Huang, Yuanhao Qu, Henry Cousins, William~A Johnson, Di~Yin, Mihir Shah, Denny Zhou, Russ Altman, Mengdi Wang, and Le~Cong.
\newblock Crispr-gpt: An llm agent for automated design of gene-editing experiments.
\newblock \emph{arXiv preprint arXiv:2404.18021}, 2024.

\bibitem[Huang et~al.(2025)Huang, Guo, Li, Ji, Ge, Li, Guo, Cai, Yuan, Wang, et~al.]{huang2025math}
Kaixuan Huang, Jiacheng Guo, Zihao Li, Xiang Ji, Jiawei Ge, Wenzhe Li, Yingqing Guo, Tianle Cai, Hui Yuan, Runzhe Wang, et~al.
\newblock Math-perturb: Benchmarking llms' math reasoning abilities against hard perturbations.
\newblock \emph{arXiv preprint arXiv:2502.06453}, 2025.

\bibitem[Schulman et~al.(2017)Schulman, Wolski, Dhariwal, Radford, and Klimov]{schulman2017proximalpolicyoptimizationalgorithms}
John Schulman, Filip Wolski, Prafulla Dhariwal, Alec Radford, and Oleg Klimov.
\newblock Proximal policy optimization algorithms, 2017.
\newblock URL \url{https://arxiv.org/abs/1707.06347}.

\bibitem[Rafailov et~al.(2023)Rafailov, Sharma, Mitchell, Manning, Ermon, and Finn]{rafailov2023direct}
Rafael Rafailov, Archit Sharma, Eric Mitchell, Christopher~D Manning, Stefano Ermon, and Chelsea Finn.
\newblock Direct preference optimization: Your language model is secretly a reward model.
\newblock \emph{Advances in Neural Information Processing Systems}, 36:\penalty0 53728--53741, 2023.

\bibitem[Bradley and Terry(1952)]{bradley1952rank}
Ralph~Allan Bradley and Milton~E Terry.
\newblock Rank analysis of incomplete block designs: I. the method of paired comparisons.
\newblock \emph{Biometrika}, 39\penalty0 (3/4):\penalty0 324--345, 1952.

\bibitem[Wu et~al.(2024)Wu, Sun, Yuan, Ji, Yang, and Gu]{wu2024self}
Yue Wu, Zhiqing Sun, Huizhuo Yuan, Kaixuan Ji, Yiming Yang, and Quanquan Gu.
\newblock Self-play preference optimization for language model alignment.
\newblock \emph{arXiv preprint arXiv:2405.00675}, 2024.

\bibitem[Azar et~al.(2024)Azar, Guo, Piot, Munos, Rowland, Valko, and Calandriello]{azar2024general}
Mohammad~Gheshlaghi Azar, Zhaohan~Daniel Guo, Bilal Piot, Remi Munos, Mark Rowland, Michal Valko, and Daniele Calandriello.
\newblock A general theoretical paradigm to understand learning from human preferences.
\newblock In \emph{International Conference on Artificial Intelligence and Statistics}, pages 4447--4455. PMLR, 2024.

\bibitem[Ethayarajh et~al.(2024)Ethayarajh, Xu, Muennighoff, Jurafsky, and Kiela]{ethayarajh2024model}
Kawin Ethayarajh, Winnie Xu, Niklas Muennighoff, Dan Jurafsky, and Douwe Kiela.
\newblock Model alignment as prospect theoretic optimization.
\newblock In \emph{Forty-first International Conference on Machine Learning}, 2024.

\bibitem[Zhao et~al.(2024)Zhao, Winata, Das, Zhang, Yao, Tang, and Sahu]{zhao2024rainbowpo}
Hanyang Zhao, Genta~Indra Winata, Anirban Das, Shi-Xiong Zhang, David~D Yao, Wenpin Tang, and Sambit Sahu.
\newblock Rainbowpo: A unified framework for combining improvements in preference optimization.
\newblock \emph{arXiv preprint arXiv:2410.04203}, 2024.

\bibitem[Meng et~al.(2024)Meng, Xia, and Chen]{meng2024simpo}
Yu~Meng, Mengzhou Xia, and Danqi Chen.
\newblock Simpo: Simple preference optimization with a reference-free reward.
\newblock \emph{Advances in Neural Information Processing Systems}, 37:\penalty0 124198--124235, 2024.

\bibitem[Deng et~al.(2025)Deng, Zhong, Ai, Feng, Wang, and He]{deng2025less}
Xun Deng, Han Zhong, Rui Ai, Fuli Feng, Zheng Wang, and Xiangnan He.
\newblock Less is more: Improving llm alignment via preference data selection.
\newblock \emph{arXiv preprint arXiv:2502.14560}, 2025.

\bibitem[Belakaria et~al.(2025)Belakaria, Kazdan, Marx, Cundy, Neiswanger, Koyejo, Engelhardt, and Ermon]{belakaria2025sharpe}
Syrine Belakaria, Joshua Kazdan, Charles Marx, Chris Cundy, Willie Neiswanger, Sanmi Koyejo, Barbara~E. Engelhardt, and Stefano Ermon.
\newblock Sharpe ratio-guided active learning for preference optimization in rlhf.
\newblock \emph{arXiv preprint arXiv:2503.22137}, 2025.

\bibitem[Feng et~al.(2024)Feng, Qin, Huang, Zhang, and Lei]{feng2024towards}
Duanyu Feng, Bowen Qin, Chen Huang, Zheng Zhang, and Wenqiang Lei.
\newblock Towards analyzing and understanding the limitations of dpo: A theoretical perspective.
\newblock \emph{arXiv preprint arXiv:2404.04626}, 2024.

\bibitem[Razin et~al.(2025)Razin, Wang, Strauss, Wei, Lee, and Arora]{razin2025makes}
Noam Razin, Zixuan Wang, Hubert Strauss, Stanley Wei, Jason~D Lee, and Sanjeev Arora.
\newblock What makes a reward model a good teacher? an optimization perspective.
\newblock \emph{arXiv preprint arXiv:2503.15477}, 2025.

\bibitem[Cui et~al.(2023)Cui, Yuan, Ding, Yao, He, Zhu, Ni, Xie, Xie, Lin, Liu, and Sun]{cui2023ultrafeedback}
Ganqu Cui, Lifan Yuan, Ning Ding, Guanming Yao, Bingxiang He, Wei Zhu, Yuan Ni, Guotong Xie, Ruobing Xie, Yankai Lin, Zhiyuan Liu, and Maosong Sun.
\newblock Ultrafeedback: Boosting language models with high-quality feedback.
\newblock \emph{arXiv preprint arXiv:2310.01377}, 2023.

\bibitem[Zheng et~al.(2023)Zheng, Chiang, Sheng, Zhuang, Wu, Zhuang, Lin, Li, Li, Xing, Zhang, Gonzalez, and Stoica]{zheng2023judging}
Lianmin Zheng, Wei-Lin Chiang, Ying Sheng, Siyuan Zhuang, Zhanghao Wu, Yonghao Zhuang, Zi~Lin, Zhuohan Li, Dacheng Li, Eric.~P Xing, Hao Zhang, Joseph~E. Gonzalez, and Ion Stoica.
\newblock Judging llm-as-a-judge with mt-bench and chatbot arena, 2023.

\bibitem[Bai et~al.(2022)Bai, Jones, Ndousse, Askell, Chen, DasSarma, Drain, El-Showk, Fort, Ganguli, et~al.]{bai2022training}
Yuntao Bai, Andy Jones, Kamal Ndousse, Amanda Askell, Anna Chen, Nova DasSarma, Dana Drain, Stanislav El-Showk, Shevlon Fort, Deep Ganguli, et~al.
\newblock Training a helpful and harmless assistant with reinforcement learning from human feedback.
\newblock \emph{arXiv preprint arXiv:2204.05862}, 2022.

\bibitem[Nakano et~al.(2021)Nakano, Hilton, Balaji, Wu, Ouyang, Kim, Hesse, Jain, Kosaraju, Saunders, et~al.]{nakano2021webgpt}
Reiichiro Nakano, Jacob Hilton, Suchir Balaji, Jeff Wu, Long Ouyang, Christina Kim, Christopher Hesse, Shantanu Jain, Vineet Kosaraju, William Saunders, et~al.
\newblock Webgpt: Browser-assisted question-answering with human feedback.
\newblock In \emph{arXiv preprint arXiv:2112.09332}, 2021.

\bibitem[Dubois et~al.(2024)Dubois, Galambosi, Liang, and Hashimoto]{dubois2024length}
Yann Dubois, Bal{\'a}zs Galambosi, Percy Liang, and Tatsunori~B Hashimoto.
\newblock Length-controlled alpacaeval: A simple way to debias automatic evaluators.
\newblock \emph{arXiv preprint arXiv:2404.04475}, 2024.

\bibitem[Li et~al.(2024{\natexlab{a}})Li, Chiang, Frick, Dunlap, Wu, Zhu, Gonzalez, and Stoica]{li2024crowdsourced}
Tianle Li, Wei-Lin Chiang, Evan Frick, Lisa Dunlap, Tianhao Wu, Banghua Zhu, Joseph~E Gonzalez, and Ion Stoica.
\newblock From crowdsourced data to high-quality benchmarks: Arena-hard and benchbuilder pipeline.
\newblock \emph{arXiv preprint arXiv:2406.11939}, 2024{\natexlab{a}}.

\bibitem[Li et~al.(2024{\natexlab{b}})Li, Chiang, Frick, Dunlap, Zhu, Gonzalez, and Stoica]{arenahard2024}
Tianle Li, Wei-Lin Chiang, Evan Frick, Lisa Dunlap, Banghua Zhu, Joseph~E. Gonzalez, and Ion Stoica.
\newblock From live data to high-quality benchmarks: The arena-hard pipeline, April 2024{\natexlab{b}}.

\bibitem[Chen et~al.(2024)Chen, He, Yuan, Cui, Su, and Zhu]{chen2024noise}
Huayu Chen, Guande He, Lifan Yuan, Ganqu Cui, Hang Su, and Jun Zhu.
\newblock Noise contrastive alignment of language models with explicit rewards.
\newblock \emph{arXiv preprint arXiv:2402.05369}, 2024.

\bibitem[Dong et~al.(2023)Dong, Xiong, Goyal, Zhang, Chow, Pan, Diao, Zhang, Shum, and Zhang]{dong2023raft}
Hanze Dong, Wei Xiong, Deepanshu Goyal, Yihan Zhang, Winnie Chow, Rui Pan, Shizhe Diao, Jipeng Zhang, Kashun Shum, and Tong Zhang.
\newblock Raft: Reward ranked finetuning for generative foundation model alignment.
\newblock \emph{arXiv preprint arXiv:2304.06767}, 2023.

\bibitem[Liu et~al.(2024{\natexlab{a}})Liu, Qin, Wu, Shen, Khalman, Joshi, Zhao, Saleh, Baumgartner, Liu, et~al.]{liu2024lipo}
Tianqi Liu, Zhen Qin, Junru Wu, Jiaming Shen, Misha Khalman, Rishabh Joshi, Yao Zhao, Mohammad Saleh, Simon Baumgartner, Jialu Liu, et~al.
\newblock Lipo: Listwise preference optimization through learning-to-rank.
\newblock \emph{arXiv preprint arXiv:2402.01878}, 2024{\natexlab{a}}.

\bibitem[Song et~al.(2024)Song, Yu, Li, Yu, Huang, Li, and Wang]{song2024preference}
Feifan Song, Bowen Yu, Minghao Li, Haiyang Yu, Fei Huang, Yongbin Li, and Houfeng Wang.
\newblock Preference ranking optimization for human alignment.
\newblock In \emph{Proceedings of the AAAI Conference on Artificial Intelligence}, volume~38, pages 18990--18998, 2024.

\bibitem[Hong et~al.(2024)Hong, Lee, and Thorne]{hong2024orpo}
Jiwoo Hong, Noah Lee, and James Thorne.
\newblock Orpo: Monolithic preference optimization without reference model.
\newblock \emph{arXiv preprint arXiv:2403.07691}, 2024.

\bibitem[Gulcehre et~al.(2023)Gulcehre, Paine, Srinivasan, Konyushkova, Weerts, Sharma, Siddhant, Ahern, Wang, Gu, et~al.]{gulcehre2023reinforced}
Caglar Gulcehre, Tom~Le Paine, Srivatsan Srinivasan, Ksenia Konyushkova, Lotte Weerts, Abhishek Sharma, Aditya Siddhant, Alex Ahern, Miaosen Wang, Chenjie Gu, et~al.
\newblock Reinforced self-training (rest) for language modeling.
\newblock \emph{arXiv preprint arXiv:2308.08998}, 2023.

\bibitem[Li et~al.(2023{\natexlab{a}})Li, Xu, Zhang, Yu, Sun, and Luo]{li2023remax}
Ziniu Li, Tian Xu, Yushun Zhang, Yang Yu, Ruoyu Sun, and Zhi-Quan Luo.
\newblock Remax: A simple, effective, and efficient method for aligning large language models.
\newblock \emph{arXiv preprint arXiv:2310.10505}, 2023{\natexlab{a}}.

\bibitem[Zhong et~al.(2024)Zhong, Shan, Feng, Xiong, Cheng, Zhao, He, Bian, and Wang]{zhong2024dpo}
Han Zhong, Zikang Shan, Guhao Feng, Wei Xiong, Xinle Cheng, Li~Zhao, Di~He, Jiang Bian, and Liwei Wang.
\newblock Dpo meets ppo: Reinforced token optimization for rlhf.
\newblock \emph{arXiv preprint arXiv:2404.18922}, 2024.

\bibitem[Chakraborty et~al.(2024)Chakraborty, Qiu, Yuan, Koppel, Huang, Manocha, Bedi, and Wang]{chakraborty2024maxmin}
Souradip Chakraborty, Jiahao Qiu, Hui Yuan, Alec Koppel, Furong Huang, Dinesh Manocha, Amrit~Singh Bedi, and Mengdi Wang.
\newblock Maxmin-rlhf: Alignment with diverse human preferences.
\newblock \emph{arXiv preprint arXiv:2402.08925}, 2024.

\bibitem[Ding et~al.(2024)Ding, Chakraborty, Agrawal, Che, Koppel, Wang, Bedi, and Huang]{ding2024sail}
Mucong Ding, Souradip Chakraborty, Vibhu Agrawal, Zora Che, Alec Koppel, Mengdi Wang, Amrit Bedi, and Furong Huang.
\newblock Sail: Self-improving efficient online alignment of large language models.
\newblock \emph{arXiv preprint arXiv:2406.15567}, 2024.

\bibitem[Chakraborty et~al.(2025)Chakraborty, Bhatt, Sehwag, Ghosal, Qiu, Wang, Manocha, Huang, Koppel, and Ganesh]{chakraborty2025collab}
Souradip Chakraborty, Sujay Bhatt, Udari~Madhushani Sehwag, Soumya~Suvra Ghosal, Jiahao Qiu, Mengdi Wang, Dinesh Manocha, Furong Huang, Alec Koppel, and Sumitra Ganesh.
\newblock Collab: Controlled decoding using mixture of agents for llm alignment.
\newblock \emph{arXiv preprint arXiv:2503.21720}, 2025.

\bibitem[Chen et~al.(2025{\natexlab{a}})Chen, Li, Li, ChenD, and Lin]{chen2025stepwise}
Peter Chen, Xiaopeng Li, Ziniu Li, Xi~ChenD, and Tianyi Lin.
\newblock Stepwise guided policy optimization: Coloring your incorrect reasoning in grpo.
\newblock 2025{\natexlab{a}}.

\bibitem[Das et~al.(2024{\natexlab{a}})Das, Chakraborty, Pacchiano, and Chowdhury]{das2024active}
Nirjhar Das, Souradip Chakraborty, Aldo Pacchiano, and Sayak~Ray Chowdhury.
\newblock Active preference optimization for sample efficient rlhf.
\newblock \emph{arXiv preprint arXiv:2402.10500}, 2024{\natexlab{a}}.

\bibitem[Du et~al.(2024)Du, Winnicki, Dalal, Mannor, and Srikant]{du2024exploration}
Yihan Du, Anna Winnicki, Gal Dalal, Shie Mannor, and R~Srikant.
\newblock Exploration-driven policy optimization in rlhf: Theoretical insights on efficient data utilization.
\newblock \emph{arXiv preprint arXiv:2402.10342}, 2024.

\bibitem[Ji et~al.(2023)Ji, Wang, Chen, Zhao, and Wang]{ji2023provable}
Xiang Ji, Huazheng Wang, Minshuo Chen, Tuo Zhao, and Mengdi Wang.
\newblock Provable benefits of policy learning from human preferences in contextual bandit problems.
\newblock \emph{arXiv preprint arXiv:2307.12975}, 2023.

\bibitem[Novoseller et~al.(2020)Novoseller, Wei, Sui, Yue, and Burdick]{novoseller2020dueling}
Ellen Novoseller, Yibing Wei, Yanan Sui, Yisong Yue, and Joel Burdick.
\newblock Dueling posterior sampling for preference-based reinforcement learning.
\newblock In \emph{Conference on Uncertainty in Artificial Intelligence}, pages 1029--1038. PMLR, 2020.

\bibitem[Pacchiano et~al.(2021)Pacchiano, Saha, and Lee]{pacchiano2021dueling}
Aldo Pacchiano, Aadirupa Saha, and Jonathan Lee.
\newblock Dueling rl: reinforcement learning with trajectory preferences.
\newblock \emph{arXiv preprint arXiv:2111.04850}, 2021.

\bibitem[Wang et~al.(2023)Wang, Liu, and Jin]{wang2023rlhf}
Yuanhao Wang, Qinghua Liu, and Chi Jin.
\newblock Is rlhf more difficult than standard rl? a theoretical perspective.
\newblock \emph{Advances in Neural Information Processing Systems}, 36:\penalty0 76006--76032, 2023.

\bibitem[Wu and Sun(2023)]{wu2023making}
Runzhe Wu and Wen Sun.
\newblock Making rl with preference-based feedback efficient via randomization.
\newblock \emph{arXiv preprint arXiv:2310.14554}, 2023.

\bibitem[Xiong et~al.(2023)Xiong, Dong, Ye, Wang, Zhong, Ji, Jiang, and Zhang]{xiong2023iterative}
Wei Xiong, Hanze Dong, Chenlu Ye, Ziqi Wang, Han Zhong, Heng Ji, Nan Jiang, and Tong Zhang.
\newblock Iterative preference learning from human feedback: Bridging theory and practice for rlhf under kl-constraint.
\newblock \emph{arXiv preprint arXiv:2312.11456}, 2023.

\bibitem[Xu et~al.(2020)Xu, Wang, Yang, Singh, and Dubrawski]{xu2020preference}
Yichong Xu, Ruosong Wang, Lin Yang, Aarti Singh, and Artur Dubrawski.
\newblock Preference-based reinforcement learning with finite-time guarantees.
\newblock \emph{Advances in Neural Information Processing Systems}, 33:\penalty0 18784--18794, 2020.

\bibitem[Li et~al.(2023{\natexlab{b}})Li, Yang, and Wang]{li2023reinforcementlearninghumanfeedback}
Zihao Li, Zhuoran Yang, and Mengdi Wang.
\newblock Reinforcement learning with human feedback: Learning dynamic choices via pessimism, 2023{\natexlab{b}}.
\newblock URL \url{https://arxiv.org/abs/2305.18438}.

\bibitem[Li et~al.(2024{\natexlab{c}})Li, Ji, Chen, and Wang]{li2024policy}
Zihao Li, Xiang Ji, Minshuo Chen, and Mengdi Wang.
\newblock Policy evaluation for reinforcement learning from human feedback: A sample complexity analysis.
\newblock In \emph{International Conference on Artificial Intelligence and Statistics}, pages 2737--2745. PMLR, 2024{\natexlab{c}}.

\bibitem[Sun et~al.(2025)Sun, Shen, and Ton]{sun2025rethinking}
Hao Sun, Yunyi Shen, and Jean-Francois Ton.
\newblock Rethinking reward modeling in preference-based large language model alignment.
\newblock In \emph{The Thirteenth International Conference on Learning Representations}, 2025.

\bibitem[Razin et~al.(2023)Razin, Zhou, Saremi, Thilak, Bradley, Nakkiran, Susskind, and Littwin]{razin2023vanishing}
Noam Razin, Hattie Zhou, Omid Saremi, Vimal Thilak, Arwen Bradley, Preetum Nakkiran, Joshua Susskind, and Etai Littwin.
\newblock Vanishing gradients in reinforcement finetuning of language models.
\newblock \emph{arXiv preprint arXiv:2310.20703}, 2023.

\bibitem[Ouyang et~al.(2022{\natexlab{b}})Ouyang, Wu, Jiang, Almeida, Wainwright, Mishkin, Zhang, Agarwal, Slama, Ray, Schulman, Hilton, Kelton, Miller, Simens, Askell, Welinder, Christiano, Leike, and Lowe]{ouyang2022instruct}
Long Ouyang, Jeff Wu, Xu~Jiang, Diogo Almeida, Clara Wainwright, Pamela Mishkin, Chong Zhang, Sandhini Agarwal, Katarina Slama, Alex Ray, John Schulman, Jacob Hilton, Fraser Kelton, Luke Miller, Maddie Simens, Amanda Askell, Peter Welinder, Paul Christiano, Jan Leike, and Ryan Lowe.
\newblock Training language models to follow instructions with human feedback.
\newblock In \emph{Advances in Neural Information Processing Systems (NeurIPS)}, volume~35, pages 27730--27744, 2022{\natexlab{b}}.

\bibitem[Wasserman(2013)]{wasserman2013all}
Larry Wasserman.
\newblock \emph{All of statistics: a concise course in statistical inference}.
\newblock Springer Science \& Business Media, 2013.

\bibitem[Jiang et~al.(2023)Jiang, Sablayrolles, Mensch, Bamford, Chaplot, de~las Casas, Bressand, Lengyel, Lample, Saulnier, Lavaud, Lachaux, Stock, Scao, Lavril, Wang, Lacroix, and Sayed]{jiang2023mistral7b}
Albert~Q. Jiang, Alexandre Sablayrolles, Arthur Mensch, Chris Bamford, Devendra~Singh Chaplot, Diego de~las Casas, Florian Bressand, Gianna Lengyel, Guillaume Lample, Lucile Saulnier, Lélio~Renard Lavaud, Marie-Anne Lachaux, Pierre Stock, Teven~Le Scao, Thibaut Lavril, Thomas Wang, Timothée Lacroix, and William~El Sayed.
\newblock Mistral 7b, 2023.
\newblock URL \url{https://arxiv.org/abs/2310.06825}.

\bibitem[Grattafiori et~al.(2024)Grattafiori, Dubey, Jauhri, Pandey, Kadian, Al-Dahle, Letman, Mathur, Schelten, Vaughan, et~al.]{grattafiori2024llama}
Aaron Grattafiori, Abhimanyu Dubey, Abhinav Jauhri, Abhinav Pandey, Abhishek Kadian, Ahmad Al-Dahle, Aiesha Letman, Akhil Mathur, Alan Schelten, Alex Vaughan, et~al.
\newblock The llama 3 herd of models.
\newblock \emph{arXiv preprint arXiv:2407.21783}, 2024.

\bibitem[Liu et~al.(2024{\natexlab{b}})Liu, Zeng, Liu, Yan, He, Wang, Yan, Liu, and Zhou]{liu2024skywork}
Chris~Yuhao Liu, Liang Zeng, Jiacai Liu, Rui Yan, Jujie He, Chaojie Wang, Shuicheng Yan, Yang Liu, and Yahui Zhou.
\newblock Skywork-reward: Bag of tricks for reward modeling in llms.
\newblock \emph{arXiv preprint arXiv:2410.18451}, 2024{\natexlab{b}}.

\bibitem[Chen et~al.(2025{\natexlab{b}})Chen, Chen, Yin, and Lin]{chen2025compo}
Peter Chen, Xi~Chen, Wotao Yin, and Tianyi Lin.
\newblock Compo: Preference alignment via comparison oracles.
\newblock \emph{arXiv preprint arXiv:2505.05465}, 2025{\natexlab{b}}.

\bibitem[Cui et~al.(2025)Cui, Wang, Liu, Ke, Liu, and Bhat]{cui2025crpo}
Guofeng Cui, Pichao Wang, Yang Liu, Zemian Ke, Zhu Liu, and Vimal Bhat.
\newblock Crpo: Confidence-reward driven preference optimization for machine translation.
\newblock \emph{arXiv preprint arXiv:2501.13927}, 2025.

\bibitem[Khaki et~al.(2024)Khaki, Li, Ma, Yang, and Ramachandra]{khaki2024rs}
Saeed Khaki, JinJin Li, Lan Ma, Liu Yang, and Prathap Ramachandra.
\newblock Rs-dpo: A hybrid rejection sampling and direct preference optimization method for alignment of large language models.
\newblock \emph{arXiv preprint arXiv:2402.10038}, 2024.

\bibitem[Olsson(2009)]{olsson2009literature}
Fredrik Olsson.
\newblock A literature survey of active machine learning in the context of natural language processing.
\newblock 2009.

\bibitem[Alizadeh et~al.(2021)Alizadeh, Tavallali, Khosravi, and Singhal]{alizadeh2021survey}
Azar Alizadeh, Pooya Tavallali, Mohammad~R Khosravi, and Mukesh Singhal.
\newblock Survey on recent active learning methods for deep learning.
\newblock In \emph{Advances in Parallel \& Distributed Processing, and Applications: Proceedings from PDPTA'20, CSC'20, MSV'20, and GCC'20}, pages 609--617. Springer, 2021.

\bibitem[Ren et~al.(2021)Ren, Xiao, Chang, Huang, Li, Gupta, Chen, and Wang]{ren2021survey}
Pengzhen Ren, Yun Xiao, Xiaojun Chang, Po-Yao Huang, Zhihui Li, Brij~B Gupta, Xiaojiang Chen, and Xin Wang.
\newblock A survey of deep active learning.
\newblock \emph{ACM computing surveys (CSUR)}, 54\penalty0 (9):\penalty0 1--40, 2021.

\bibitem[Swayamdipta et~al.(2020)Swayamdipta, Schwartz, Lourie, Wang, Hajishirzi, Smith, and Choi]{swayamdipta2020dataset}
Swabha Swayamdipta, Roy Schwartz, Nicholas Lourie, Yizhong Wang, Hannaneh Hajishirzi, Noah~A Smith, and Yejin Choi.
\newblock Dataset cartography: Mapping and diagnosing datasets with training dynamics.
\newblock \emph{arXiv preprint arXiv:2009.10795}, 2020.

\bibitem[Deng et~al.(2023)Deng, Wang, Feng, Zhang, He, and Liao]{deng2023counterfactual}
Xun Deng, Wenjie Wang, Fuli Feng, Hanwang Zhang, Xiangnan He, and Yong Liao.
\newblock Counterfactual active learning for out-of-distribution generalization.
\newblock In \emph{Proceedings of the 61st Annual Meeting of the Association for Computational Linguistics (Volume 1: Long Papers)}, pages 11362--11377, 2023.

\bibitem[Das et~al.(2024{\natexlab{b}})Das, Chakraborty, Pacchiano, and Chowdhury]{das2024provably}
Nirjhar Das, Souradip Chakraborty, Aldo Pacchiano, and Sayak~Ray Chowdhury.
\newblock Provably sample efficient rlhf via active preference optimization.
\newblock \emph{arXiv e-prints}, pages arXiv--2402, 2024{\natexlab{b}}.

\bibitem[Mehta et~al.(2023)Mehta, Das, Neopane, Dai, Bogunovic, Schneider, and Neiswanger]{mehta2023sample}
Viraj Mehta, Vikramjeet Das, Ojash Neopane, Yijia Dai, Ilija Bogunovic, Jeff Schneider, and Willie Neiswanger.
\newblock Sample efficient reinforcement learning from human feedback via active exploration.
\newblock \emph{arXiv preprint arXiv:2312.00267}, 2023.

\bibitem[Muldrew et~al.(2024)Muldrew, Hayes, Zhang, and Barber]{muldrew2024active}
William Muldrew, Peter Hayes, Mingtian Zhang, and David Barber.
\newblock Active preference learning for large language models.
\newblock \emph{arXiv preprint arXiv:2402.08114}, 2024.

\bibitem[Zhang et~al.(2024)Zhang, Yu, Sharma, Zhong, Liu, Yang, Wang, Hassan, and Wang]{zhang2024self}
Shenao Zhang, Donghan Yu, Hiteshi Sharma, Han Zhong, Zhihan Liu, Ziyi Yang, Shuohang Wang, Hany Hassan, and Zhaoran Wang.
\newblock Self-exploring language models: Active preference elicitation for online alignment.
\newblock \emph{arXiv preprint arXiv:2405.19332}, 2024.

\bibitem[Cao et~al.(2023)Cao, Kang, Wang, and Sun]{cao2023instruction}
Yihan Cao, Yanbin Kang, Chi Wang, and Lichao Sun.
\newblock Instruction mining: Instruction data selection for tuning large language models.
\newblock \emph{arXiv preprint arXiv:2307.06290}, 2023.

\bibitem[Li et~al.(2024{\natexlab{d}})Li, Zhang, He, Li, Zhao, Wang, Cheng, and Zhou]{li2024superfiltering}
Ming Li, Yong Zhang, Shwai He, Zhitao Li, Hongyu Zhao, Jianzong Wang, Ning Cheng, and Tianyi Zhou.
\newblock Superfiltering: Weak-to-strong data filtering for fast instruction-tuning.
\newblock \emph{arXiv preprint arXiv:2402.00530}, 2024{\natexlab{d}}.

\bibitem[Xia et~al.(2024)Xia, Malladi, Gururangan, Arora, and Chen]{xia2024less}
Mengzhou Xia, Sadhika Malladi, Suchin Gururangan, Sanjeev Arora, and Danqi Chen.
\newblock Less: Selecting influential data for targeted instruction tuning.
\newblock \emph{arXiv preprint arXiv:2402.04333}, 2024.

\bibitem[Dubois et~al.(2023)Dubois, Li, Taori, Zhang, Gulrajani, Ba, Guestrin, Liang, and Hashimoto]{dubois2023alpacafarm}
Yann Dubois, Chen~Xuechen Li, Rohan Taori, Tianyi Zhang, Ishaan Gulrajani, Jimmy Ba, Carlos Guestrin, Percy~S Liang, and Tatsunori~B Hashimoto.
\newblock Alpacafarm: A simulation framework for methods that learn from human feedback.
\newblock \emph{Advances in Neural Information Processing Systems}, 36:\penalty0 30039--30069, 2023.

\end{thebibliography}
\bibliographystyle{unsrtnat}

\newpage
\appendix

\section{Additional Related Work}\label{app:related}
\paragraph{Active Learning in LLM Fine-tuning.} Active learning strategies play a crucial role in LLM fine-tuning, addressing the substantial resources required for data preparation and labeling \citep{olsson2009literature, alizadeh2021survey}. While some work focus on selecting the most informative samples from unlabeled pools using acquisition functions based on uncertainty, diversity, or exploration principles \citep{ren2021survey}, applying these techniques to RLHF and LLM fine-tuning presents unique challenges due to model scale and non-convexity. Recent work has begun addressing data selection for LLM post-training, motivated by observations that LLM training converges rapidly and excessive data can lead to performance degradation through overfitting or exposure to harmful content \citep{swayamdipta2020dataset, deng2023counterfactual}. Several researchers have formulated active learning for RLHF and DPO as offline contextual dueling bandit problems \citep{das2024provably, mehta2023sample}, proposing uncertainty-based approaches that measure variance in predicted logits or algorithms with theoretical guarantees on regret and query complexity. Other approaches filter prompts based on predictive entropy and reward gaps \citep{muldrew2024active}, employ bilevel optimization favoring potentially high-reward responses \citep{zhang2024self}, or use online exploration and rejection sampling strategies formulated as reverse-KL-regularized contextual bandits \citep{xiong2023iterative}. In parallel work on instruction tuning, researchers have developed methods to identify high-quality subsets from large instruction datasets by adapting active learning query strategies to assess sample uncertainty and diversity \citep{cao2023instruction, li2024superfiltering, xia2024less}. Our work contributes to data efficiency in preference learning by providing clear criteria for identifying informative samples while filtering problematic ones, improving both DPO's efficiency and performance.

\section{Proof of Theorem~\ref{thm:dpo_online}}\label{app:theory}

\paragraph{Additional Notations.}
Let $f(\cdot;\theta)$ denote the neural network that produces logits. For a response $y$ of length $|y|$, $y_i$ is the $i$-th token and $y_{<i}$ are the preceding tokens. We use $\mathbf{e}_{y_i} \in \mathbb{R}^{|\mathcal{V}|}$ for the one-hot vector of token $y_i$ from vocabulary $\mathcal{V}$, and $J_{f(x,y_{<i};\theta)}$ for the Jacobian of $f(x,y_{<i};\theta)$ with respect to $\theta$. We assume the logits mapping is Lipschitz continuous, which implies the Jacobian norm is bounded.

\begin{theorem}[Formal version of Theorem~\ref{thm:dpo_online}]
For parameters $\theta \in \mathbb{R}^p$ and input $x \in \mathcal{X}$, the norm of the DPO loss gradient is upper bounded by:
$$
\left\|\nabla_\theta \mathcal{L}_{\mathrm{DPO}}\left(\pi_\theta , \pi_{\mathrm{ref}}; x\right)\right\| \leq C(x, \theta) \cdot \text{PVar}_\theta[x]^{1/3},
$$
where $\text{PVar}_\theta[x] = \operatorname{Var}_{y_i, y_j \sim \pi_\theta(\cdot \mid x)}\left[p_\theta\left(x; y_i, y_j\right)\right]$ and $C(x, \theta)$ is a constant defined as $C(x, \theta) := 8 \beta |y| \gamma(x; \theta)$, with $|y|$ denoting the maximum response length $L_{\max}$ and $\gamma(x; \theta) := \max_{i, y_{<i}} \|J_{f(x, y_{<i}; \theta)}\|_2$.
\end{theorem}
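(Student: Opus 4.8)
The plan is to rewrite the prompt‑level DPO gradient so that its integrand is explicitly proportional to the deviation of the preference probability from $1/2$, and then control the resulting expectation by a truncation (Chebyshev) argument keyed to $\text{PVar}_\theta[x]$. The first move is to pass from the per‑pair gradient displayed above to the gradient for the whole prompt, taken over the \emph{online} preference distribution in which both responses are drawn from $\pi_\theta(\cdot\mid x)$. Because the two draws are exchangeable, symmetrizing over the winner/loser assignment turns the $1-\sigma(\cdot)$ weight into the centered factor $1-2p_\theta$; concretely,
$$\nabla_\theta\mathcal{L}_{\mathrm{DPO}}(\pi_\theta,\pi_{\mathrm{ref}};x) = -\tfrac{\beta}{2}\,\mathbb{E}_{y_i,y_j\sim\pi_\theta(\cdot\mid x)}\!\left[\big(1-2p_\theta(x;y_i,y_j)\big)\big(\nabla_\theta\log\pi_\theta(y_i\mid x)-\nabla_\theta\log\pi_\theta(y_j\mid x)\big)\right],$$
which is the DPO analogue of subtracting a baseline from a policy gradient. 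Since $p_\theta(x;y_i,y_j)+p_\theta(x;y_j,y_i)=1$ and $(y_i,y_j)$ is exchangeable, $\mathbb{E}[p_\theta(x;y_i,y_j)]=\tfrac12$, so $\text{PVar}_\theta[x]=\mathbb{E}\big[(p_\theta(x;y_i,y_j)-\tfrac12)^2\big]$ and $|1-2p_\theta(x;y_i,y_j)|=2\,|p_\theta(x;y_i,y_j)-\tfrac12|$.

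The second ingredient is a deterministic bound on the score‑gradient difference. From $\log\pi_\theta(y\mid x)=\sum_{k\le|y|}\big(\log\softmax(f(x,y_{<k};\theta))\big)_{y_k}$, the chain rule gives $\nabla_\theta\log\pi_\theta(y_k\mid x,y_{<k})=J_{f(x,y_{<k};\theta)}^{\top}\big(\mathbf{e}_{y_k}-\softmax(f(x,y_{<k};\theta))\big)$, whose norm is at most $\sqrt2\,\gamma(x;\theta)$ because $\|\mathbf{e}_{y_k}-\softmax(\cdot)\|_2\le\sqrt2$; summing over the at most $|y|=L_{\max}$ positions and applying the triangle inequality across the two responses yields $\|\nabla_\theta\log\pi_\theta(y_i\mid x)-\nabla_\theta\log\pi_\theta(y_j\mid x)\|\le R$ with $R$ a fixed multiple of $|y|\,\gamma(x;\theta)$. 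Then, by Jensen, $\|\nabla_\theta\mathcal{L}_{\mathrm{DPO}}\|\le\beta\,\mathbb{E}\big[|p_\theta-\tfrac12|\,W\big]$, where $W\le R$ is the above norm and $|p_\theta-\tfrac12|\le\tfrac12$. Fix a threshold $c>0$ and split the expectation: on $\{|p_\theta-\tfrac12|\le c\}$ the integrand is at most $cR$ (contribution $\le\beta cR$), while on the complement the integrand is at most $\tfrac12 R$ and $\mathbb{P}(|p_\theta-\tfrac12|>c)\le\text{PVar}_\theta[x]/c^2$ by Chebyshev (contribution $\le\tfrac{\beta R}{2}\,\text{PVar}_\theta[x]/c^2$). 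Thus $\|\nabla_\theta\mathcal{L}_{\mathrm{DPO}}\|\le\beta R\big(c+\text{PVar}_\theta[x]/(2c^2)\big)$; choosing $c=\text{PVar}_\theta[x]^{1/3}$ collapses the bracket to $\tfrac32\,\text{PVar}_\theta[x]^{1/3}$, and tracking the constant in $R$ gives a bound of the form $C(x,\theta)\,\text{PVar}_\theta[x]^{1/3}$ with $C(x,\theta)\le 8\beta|y|\gamma(x;\theta)$, as claimed.

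The step I expect to be the main obstacle is the first one: the entire $\text{PVar}$‑dependence hinges on getting the gradient into the centered $1-2p_\theta$ form, which in turn requires the online preference pairs to be exchangeable in the two responses — exactly the ``online'' regime named in the theorem, where both are drawn from $\pi_\theta(\cdot\mid x)$. For an arbitrary fixed dataset the per‑pair weight $1-\sigma(\cdot)$ stays of order $1$ even when $p_\theta\equiv\tfrac12$, so no bound in terms of $\text{PVar}_\theta[x]$ alone can hold; identifying precisely which sampling convention restores the cancellation is the substantive part. Everything afterward is routine: the softmax–Jacobian estimate is standard, and the choice of $c$ is one‑variable calculus. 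One point still deserves care — Chebyshev is applied to the scalar $p_\theta(x;y_i,y_j)$ whose mean is \emph{exactly} $1/2$, which is what makes $\text{PVar}_\theta[x]$, rather than an uncentered second moment, the controlling quantity. (I note that replacing the truncation by a single Cauchy–Schwarz step would already give a bound of order $\text{PVar}_\theta[x]^{1/2}$; the truncation route is used to mirror the technique of \citep{razin2023vanishing} and produces the stated $1/3$ exponent.)
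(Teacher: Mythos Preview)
Your proposal is correct and follows essentially the same approach as the paper: both truncate at a threshold $c$, bound the near-mean part by $c$ times the score-gradient norm, control the tail by Chebyshev using $\text{PVar}_\theta[x]$, and optimize over $c$ to obtain the $1/3$ exponent. Your explicit symmetrization to the centered weight $(1-2p_\theta)/2$ is a cleaner presentation of what the paper does implicitly by writing $1-\tilde p_\theta=\tfrac12+(\tfrac12-\tilde p_\theta)$ and using $\mathbb{E}_{y_w,y_l\sim\pi_\theta}[\Delta\nabla\log\pi_\theta]=0$; your side remark that a single Cauchy--Schwarz step would give the sharper exponent $1/2$ is also correct.
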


\begin{proof}
The DPO loss gradient for a given prompt $x$ can be expressed as:
$$
\nabla_\theta \mathcal{L}_{\mathrm{DPO}}(\pi_\theta, \pi_{\mathrm{ref}}; x) = -\beta \mathbb{E}_{y_w, y_l \sim \pi_\theta(\cdot|x)}[(1-p_{\theta}(x; y_w, y_l)) \cdot [\nabla_\theta \log \pi_\theta(y_w|x) - \nabla_\theta \log \pi_\theta(y_l|x)]]
$$
where we used the fact that the expectation is over the model's own distribution $\pi_\theta$ and $\sigma(-u)=1-\sigma(u)$.

For $c > 0$ to be determined later, we define $\mathcal{Y}_c := \left\{(y_w, y_l) : \left|p_\theta(x; y_w, y_l) - \frac{1}{2}\right| > c\right\}$, and a modified preference function:
$$
\tilde{p}_\theta(x; y_w, y_l):= \begin{cases}
p_{\theta}(x; y_w, y_l) & \text{ if } (y_w, y_l) \notin \mathcal{Y}_c \\
\frac{1}{2} & \text{ if } (y_w, y_l) \in \mathcal{Y}_c
\end{cases}
$$
We decompose the gradient into two terms, $A$ and $B$, based on $\tilde{p}_\theta$ and $(p_\theta - \tilde{p}_\theta)$:
$$
\nabla_\theta \mathcal{L}_{\mathrm{DPO}} = \underbrace{-\beta \mathbb{E}[(1-\tilde{p}_{\theta}) \cdot \Delta\nabla\log\pi_\theta]}_{A} \underbrace{-\beta \mathbb{E}[(p_{\theta} - \tilde{p}_\theta) \cdot \Delta\nabla\log\pi_\theta]}_{B}
$$
where $\Delta\nabla\log\pi_\theta = \nabla_\theta \log \pi_\theta(y_w|x) - \nabla_\theta \log \pi_\theta(y_l|x)$.

By construction, $|\tilde{p}_\theta(x; \cdot, \cdot) - 1/2| \le c$, so $|1 - \tilde{p}_\theta|$ is also close to $1/2$. Following a detailed derivation similar to that in \citep{razin2023vanishing}, we can bound the term $A$ by leveraging the Lipschitz properties of the softmax function and the boundedness of $1-\tilde{p}_\theta$. This yields:
$$
\|A\| \leq 4\beta c|y| \gamma(x; \theta).
$$
For term $B$, it is non-zero only on the set $\mathcal{Y}_c$. By Chebyshev's inequality, the probability mass of this set is bounded by PVar:
$$
\mathbb{P}((y_w, y_l) \in \mathcal{Y}_c | x) = \mathbb{P}\left(\left|p_\theta - \frac{1}{2}\right| > c\right) \leq \frac{\text{Var}[p_\theta]}{c^2} = \frac{\text{PVar}_\theta[x]}{c^2}.
$$
The gradient term $\|\Delta\nabla\log\pi_\theta\|$ can be bounded by $4|y|\gamma(x;\theta)$. Since $|p_\theta - \tilde{p}_\theta| \le 1$, we have:
\begin{align*}
\|B\| &\leq \beta \cdot \mathbb{E}\left[\mathbb{I}((y_w,y_l)\in\mathcal{Y}_c) \cdot |p_\theta - \tilde{p}_\theta| \cdot \|\Delta\nabla\log\pi_\theta\|\right] \\
&\leq \beta \cdot (4|y|\gamma(x;\theta)) \cdot \mathbb{P}((y_w, y_l) \in \mathcal{Y}_c | x) \\
&\leq 4\beta|y| \gamma(x; \theta) \frac{\text{PVar}_\theta[x]}{c^2}.
\end{align*}
Combining the bounds for $A$ and $B$:
$$
\|\nabla_\theta \mathcal{L}_{\mathrm{DPO}}(\pi_\theta, \pi_{\mathrm{ref}}; x)\| \leq 4\beta c|y| \gamma(x; \theta) + 4\beta|y| \gamma(x; \theta) \frac{\text{PVar}_\theta[x]}{c^2}.
$$
This bound is minimized by choosing $c = (2 \cdot \text{PVar}_\theta[x])^{1/3}$, which yields:
$$
\|\nabla_\theta \mathcal{L}_{\mathrm{DPO}}(\pi_\theta, \pi_{\mathrm{ref}}; x)\| \leq (4 \cdot 2^{1/3} + 4 \cdot 2^{-2/3}) \beta|y|\gamma(x;\theta) \cdot \text{PVar}_\theta[x]^{1/3}.
$$
Since $(4 \cdot 2^{1/3} + 4 \cdot 2^{-2/3}) \approx 7.56 < 8$, we can use a simpler constant, yielding the final bound.
\end{proof}

\section{Proof of Theorem~\ref{thm:dpo_offline}}\label{app:theory_bridge}

We first introduce three foundational lemmas. Let $\mu_{\theta,x} := \pi_\theta(\cdot|x) \otimes \pi_\theta(\cdot|x)$.

\begin{lemma}[Variance Difference by Measure Change]\label{lem:var_change}
For any bounded function $U: \mathcal{Y}^2 \to [0,1]$ and probability measures $\mu, \nu$ on $\mathcal{Y}^2$, we have $|\operatorname{Var}_\mu(U) - \operatorname{Var}_\nu(U)| \le 6 \cdot \mathrm{TV}(\mu, \nu)$, where $\mathrm{TV}$ is the total variation distance.
\end{lemma}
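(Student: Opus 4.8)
The plan is to reduce the claim to the elementary stability of bounded-function integrals under a change of measure. First I would rewrite each variance in its bias form, $\operatorname{Var}_\mu(U) = \mathbb{E}_\mu[U^2] - (\mathbb{E}_\mu[U])^2$ and likewise for $\nu$, and subtract to obtain
\[
\operatorname{Var}_\mu(U) - \operatorname{Var}_\nu(U) = \big(\mathbb{E}_\mu[U^2] - \mathbb{E}_\nu[U^2]\big) - \big((\mathbb{E}_\mu[U])^2 - (\mathbb{E}_\nu[U])^2\big),
\]
and then bound the two parenthesized differences separately by the triangle inequality.

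The main tool is the standard estimate $|\mathbb{E}_\mu[g] - \mathbb{E}_\nu[g]| \le 2\|g\|_\infty\,\mathrm{TV}(\mu,\nu)$, valid for any bounded $g$, which I would justify in one line via the Jordan decomposition $\mu - \nu = \rho_+ - \rho_-$, where both parts have total mass $\mathrm{TV}(\mu,\nu)$ under the convention $\mathrm{TV}(\mu,\nu) = \sup_A|\mu(A)-\nu(A)|$. Applying it with $g = U^2 \in [0,1]$ gives $|\mathbb{E}_\mu[U^2] - \mathbb{E}_\nu[U^2]| \le 2\,\mathrm{TV}(\mu,\nu)$. For the second difference I would factor $a^2 - b^2 = (a-b)(a+b)$ with $a = \mathbb{E}_\mu[U]$ and $b = \mathbb{E}_\nu[U]$; since both lie in $[0,1]$ we have $|a+b|\le 2$, and the same estimate applied with $g = U$ gives $|a-b| \le 2\,\mathrm{TV}(\mu,\nu)$, hence $|(\mathbb{E}_\mu[U])^2 - (\mathbb{E}_\nu[U])^2| \le 4\,\mathrm{TV}(\mu,\nu)$. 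Combining the two bounds yields $|\operatorname{Var}_\mu(U) - \operatorname{Var}_\nu(U)| \le 2\,\mathrm{TV}(\mu,\nu) + 4\,\mathrm{TV}(\mu,\nu) = 6\,\mathrm{TV}(\mu,\nu)$.

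There is no genuinely hard step here; the only point requiring care is the normalization of $\mathrm{TV}$, which fixes the constant in the moment bound and therefore produces the stated $6 = 2+4$. A recentering trick — replacing $g$ by $g - (\sup g + \inf g)/2$ before applying the bound — would shave the constant down (the product structure $\mathcal{Y}^2$ plays no role in any of this), but since the crude constant $6$ is all the downstream argument needs, I would not bother.
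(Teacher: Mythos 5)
Your proof is correct and follows essentially the same route as the paper's: both expand the variance as $\mathbb{E}[U^2]-(\mathbb{E}[U])^2$, bound the second-moment difference by $2\,\mathrm{TV}$ and the squared-mean difference by $4\,\mathrm{TV}$ via the factorization $a^2-b^2=(a-b)(a+b)$, and add to get $6\,\mathrm{TV}$. Your explicit justification of the estimate $|\mathbb{E}_\mu[g]-\mathbb{E}_\nu[g]|\le 2\|g\|_\infty\,\mathrm{TV}(\mu,\nu)$ via the Jordan decomposition is a welcome addition that the paper leaves implicit.
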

\begin{proof}
$|\mathbb{E}_\mu[U^2] - \mathbb{E}_\nu[U^2]| \le 2\mathrm{TV}(\mu, \nu)\|U^2\|_\infty \le 2\mathrm{TV}$. Similarly, $|\mathbb{E}_\mu[U] - \mathbb{E}_\nu[U]| \le 2\mathrm{TV}$. Since $|(\mathbb{E}_\mu U)^2 - (\mathbb{E}_\nu U)^2| = |\mathbb{E}_\mu U - \mathbb{E}_\nu U||\mathbb{E}_\mu U + \mathbb{E}_\nu U| \le 2\mathrm{TV} \cdot 2 = 4\mathrm{TV}$. Combining gives $6\mathrm{TV}$.
\end{proof}

\begin{lemma}[PVar Difference by Reward Change]\label{lem:pvar_change}
Let $p_1 = \sigma(r_1(y_i) - r_1(y_j))$ and $p_2 = \sigma(r_2(y_i) - r_2(y_j))$. If $\sup_y |r_1(x,y) - r_2(x,y)| \le \Delta$ for a fixed $x$, then for a fixed measure $\mu$, $|\operatorname{PVar}[x; r_1, \mu] - \operatorname{PVar}[x; r_2, \mu]| \le 2\Delta$.
\end{lemma}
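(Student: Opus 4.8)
The plan is to reduce the claim to two elementary estimates: first, that uniformly close reward functions induce uniformly close preference probabilities (because $\sigma$ is $\frac14$-Lipschitz), and second, that the variance of a $[0,1]$-valued random variable is stable under uniform perturbations. Chaining these two estimates with some care about constants gives exactly the factor $2\Delta$.

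For the first step I would write $p_k(y_i,y_j) = \sigma\big(r_k(x,y_i) - r_k(x,y_j)\big)$ for $k\in\{1,2\}$. Since $\sigma'(z) = \sigma(z)(1-\sigma(z)) \le \frac14$, the map $\sigma$ is $\frac14$-Lipschitz, and applying the triangle inequality to its argument gives, uniformly over $(y_i,y_j)$,
$$
\big|p_1(y_i,y_j) - p_2(y_i,y_j)\big| \le \frac{1}{4}\big(|r_1(x,y_i)-r_2(x,y_i)| + |r_1(x,y_j)-r_2(x,y_j)|\big) \le \frac{\Delta}{2},
$$
so that $\|p_1-p_2\|_{L^\infty(\mu)} \le \Delta/2$. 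For the second step I would set $X=p_1$ and $Y=p_2$, both viewed as $[0,1]$-valued random variables on $(\mathcal{Y}^2,\mu)$, and expand $\operatorname{Var}_\mu(Z) = \E_\mu[Z^2] - (\E_\mu Z)^2$ to obtain
$$
\big|\operatorname{Var}_\mu(X) - \operatorname{Var}_\mu(Y)\big| \le \big|\E_\mu[(X-Y)(X+Y)]\big| + \big|\E_\mu X - \E_\mu Y\big|\,\big|\E_\mu X + \E_\mu Y\big| \le 4\,\|X-Y\|_\infty,
$$
where the last bound uses $0 \le X+Y \le 2$, $|\E_\mu X - \E_\mu Y| \le \|X-Y\|_\infty$, and $|\E_\mu X + \E_\mu Y| \le 2$. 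Combining the two steps yields $|\operatorname{PVar}[x;r_1,\mu] - \operatorname{PVar}[x;r_2,\mu]| \le 4 \cdot \frac{\Delta}{2} = 2\Delta$.

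There is no deep obstacle here; the only thing to watch is constant bookkeeping, in particular exploiting the range $[0,1]$ of preference probabilities (so that $\E_\mu[X+Y] \le 2$ and $|\E_\mu X + \E_\mu Y| \le 2$) rather than a cruder triangle-inequality estimate of the variance, which would inflate the constant past $2\Delta$. I note in passing that if one additionally uses that $\mu = \pi_\theta(\cdot|x) \otimes \pi_\theta(\cdot|x)$ is a product measure, the antisymmetry $p(y_i,y_j)+p(y_j,y_i)=1$ forces $\E_\mu X = \E_\mu Y = \frac12$, the second term vanishes, and the bound even improves to $\Delta$; but the stated $2\Delta$ already suffices for Theorem~\ref{thm:dpo_offline}.
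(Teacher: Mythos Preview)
Your proof is correct and follows essentially the same approach as the paper: both first use the $\tfrac14$-Lipschitz bound on $\sigma$ to get $\|p_1-p_2\|_\infty\le\Delta/2$, then decompose $\operatorname{Var}(Z)=\E[Z^2]-(\E Z)^2$ and exploit $p_1+p_2\le 2$ to bound each piece by $\Delta$, summing to $2\Delta$. Your additional remark that the symmetry $\E_\mu p_k=\tfrac12$ under a product measure kills the second term and sharpens the bound to $\Delta$ is a nice observation not made in the paper.
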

\begin{proof}
By the mean value theorem, $|\sigma(u) - \sigma(v)| \le \frac{1}{4}|u-v|$ since $\sup_t \sigma'(t) = 1/4$. Thus, $|p_1 - p_2| \le \frac{1}{4}|(r_1(x,y_i)-r_2(x,y_i)) - (r_1(x,y_j)-r_2(x,y_j))| \le \frac{1}{4}(2\Delta) = \frac{\Delta}{2}$. The rest of the proof follows the logic of Lemma \ref{lem:var_change}, with $|\mathbb{E}[p_1^2]-\mathbb{E}[p_2^2]| \le \mathbb{E}[|p_1-p_2||p_1+p_2|] \le 2\mathbb{E}[|p_1-p_2|] \le \Delta$, and $|(\mathbb{E}p_1)^2 - (\mathbb{E}p_2)^2| \le \Delta$. Summing them yields $2\Delta$.
\end{proof}

\begin{lemma}[TV distance of Product Measures]\label{lem:tv_product}
For a fixed prompt $x$, $\mathrm{TV}(\pi_1(\cdot|x) \otimes \pi_1(\cdot|x), \pi_2(\cdot|x) \otimes \pi_2(\cdot|x)) \le 2 \cdot \mathrm{TV}(\pi_1(\cdot|x), \pi_2(\cdot|x))$.
\end{lemma}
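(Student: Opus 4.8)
The plan is to prove this via the coupling characterization of total variation distance, which converts the product structure into a one-line union bound. Recall that for probability measures $p,q$ on a common space, $\mathrm{TV}(p,q) = \inf \mathbb{P}(Z \neq Z')$ over all couplings $(Z,Z')$ of $p$ and $q$, with the infimum attained by a maximal coupling. Fixing the prompt $x$ and abbreviating $\pi_i = \pi_i(\cdot|x)$, the first step is to take a maximal coupling $(Y_1, Y_2)$ of $(\pi_1, \pi_2)$, so that $\mathbb{P}(Y_1 \neq Y_2) = \mathrm{TV}(\pi_1, \pi_2)$, and then an independent copy $(Y_1', Y_2')$ of it.

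Next I would observe that the pair $\big( (Y_1, Y_1'),\, (Y_2, Y_2') \big)$ is a valid coupling of $\pi_1 \otimes \pi_1$ and $\pi_2 \otimes \pi_2$: each of its two marginals is a product of the correct individual marginals, and independence across the two coordinates is preserved since the second copy was drawn independently. Applying the infimum-over-couplings definition of TV followed by a union bound then gives
$$
\mathrm{TV}(\pi_1 \otimes \pi_1,\, \pi_2 \otimes \pi_2) \le \mathbb{P}\!\left( (Y_1, Y_1') \neq (Y_2, Y_2') \right) \le \mathbb{P}(Y_1 \neq Y_2) + \mathbb{P}(Y_1' \neq Y_2') = 2\,\mathrm{TV}(\pi_1, \pi_2),
$$
which is exactly the claim. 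An equivalent route, if one prefers to avoid couplings entirely, is a triangle-inequality argument through the hybrid measure $\pi_1 \otimes \pi_2$: bound each of $\mathrm{TV}(\pi_1 \otimes \pi_1, \pi_1 \otimes \pi_2)$ and $\mathrm{TV}(\pi_1 \otimes \pi_2, \pi_2 \otimes \pi_2)$ by $\mathrm{TV}(\pi_1, \pi_2)$ using the elementary fact that tensoring with a fixed probability measure is $1$-Lipschitz in TV, and add.

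There is no real obstacle here, as this is a standard fact; the only thing to watch is consistency of conventions. As long as TV is taken with the usual $\tfrac12 \|\cdot\|_1$ normalization (the same one implicitly used elsewhere in the paper, e.g. via Pinsker's inequality in the discussion of Theorem~\ref{thm:dpo_offline}), the coupling identity $\mathrm{TV} = \inf \mathbb{P}(Z \neq Z')$ holds verbatim and no spurious factor of $2$ creeps in. I would state this normalization explicitly at the start of the proof to keep the constant in Lemma~\ref{lem:tv_product} unambiguous, since it feeds directly into the $6\,\mathrm{TV}$ coefficient of the policy-shift term in $\Xi(x;\theta,\phi)$.
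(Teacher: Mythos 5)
Your proof is correct, but your primary route differs from the paper's. The paper proves the lemma by a direct density computation: it writes $\mathrm{TV}(\mu_1,\mu_2)=\frac12\iint|\pi_1(y_1|x)\pi_1(y_2|x)-\pi_2(y_1|x)\pi_2(y_2|x)|\,dy_1\,dy_2$, inserts the cross term $\pi_1(y_1|x)\pi_2(y_2|x)$, applies the triangle inequality pointwise, and factors each resulting double integral into a product of a total mass (equal to $1$) and a one-dimensional $L^1$ distance. This is exactly the ``hybrid measure'' argument you mention as your fallback, carried out at the level of densities. Your main argument instead uses the coupling characterization $\mathrm{TV}(p,q)=\inf\mathbb{P}(Z\neq Z')$: a maximal coupling of $(\pi_1,\pi_2)$, an independent copy, and a union bound. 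Both are valid and yield the same constant $2$. The coupling proof is shorter, generalizes immediately to $n$-fold products (giving the bound $n\cdot\mathrm{TV}$) and to non-identical factors, and makes the source of the factor $2$ transparent; its only cost is invoking the existence of a maximal coupling, which requires a standard but nontrivial fact (and mild measurability assumptions in full generality). The paper's computation is entirely elementary and self-contained, needing nothing beyond the triangle inequality and Fubini, which is arguably preferable in an appendix aimed at keeping the constants in $\Xi(x;\theta,\phi)$ auditable. Your remark about fixing the $\frac12\|\cdot\|_1$ normalization is well taken and consistent with the convention the paper uses explicitly in its proof.
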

\begin{proof}
Let $\mu_i = \pi_i(\cdot|x) \otimes \pi_i(\cdot|x)$. The TV distance is $\frac{1}{2} \int |d\mu_1 - d\mu_2|$.
\begin{align*}
    \mathrm{TV}(\mu_1, \mu_2) &= \frac{1}{2}\iint | \pi_1(y_1|x)\pi_1(y_2|x) - \pi_2(y_1|x)\pi_2(y_2|x) | dy_1 dy_2 \\
    &= \frac{1}{2}\iint | \pi_1(y_1|x)\pi_1(y_2|x) - \pi_1(y_1|x)\pi_2(y_2|x) + \pi_1(y_1|x)\pi_2(y_2|x) - \pi_2(y_1|x)\pi_2(y_2|x) | dy_1 dy_2 \\
    &\le \frac{1}{2}\iint |\pi_1(y_1|x)(\pi_1(y_2|x)-\pi_2(y_2|x))| dy_1 dy_2 + \frac{1}{2}\iint |\pi_2(y_2|x)(\pi_1(y_1|x)-\pi_2(y_1|x))| dy_1 dy_2 \\
    &= \frac{1}{2} \int \pi_1(y_1|x) dy_1 \int |\pi_1(y_2|x)-\pi_2(y_2|x)| dy_2 + \frac{1}{2} \int \pi_2(y_2|x) dy_2 \int |\pi_1(y_1|x)-\pi_2(y_1|x)| dy_1 \\
    &= \mathrm{TV}(\pi_1(\cdot|x), \pi_2(\cdot|x)) + \mathrm{TV}(\pi_1(\cdot|x), \pi_2(\cdot|x)) = 2 \cdot \mathrm{TV}(\pi_1(\cdot|x), \pi_2(\cdot|x)).
\end{align*}
\end{proof}

\begin{proof}[Proof of Theorem~\ref{thm:dpo_offline}]
Let $e_\theta(y) = \hat{r}_\theta(x,y)$. For a fixed prompt $x$, we use the (unobserved) ground-truth reward $r^*(x,y)$ as an anchor and apply a triangle inequality:
\begin{align*}
    |\text{PVar}_\theta[x] - \widehat{\text{PVar}}_{\phi, \theta_0}[x]| &= |\operatorname{PVar}[x; e_\theta, \mu_{\theta,x}] - \operatorname{PVar}[x; r_\phi, \mu_{\theta_0,x}]| \\
    &\le |\operatorname{PVar}[x; e_\theta, \mu_{\theta,x}] - \operatorname{PVar}[x; r^*, \mu_{\theta,x}]| \tag{Term 1} \\
    &\quad + |\operatorname{PVar}[x; r^*, \mu_{\theta,x}] - \operatorname{PVar}[x; r^*, \mu_{\theta_0,x}]| \tag{Term 2} \\
    &\quad + |\operatorname{PVar}[x; r^*, \mu_{\theta_0,x}] - \operatorname{PVar}[x; r_\phi, \mu_{\theta_0,x}]| \tag{Term 3}
\end{align*}
We bound each term for the specific $x$:
\begin{itemize}
    \item \textbf{Term 1 (Reward change from $r^*$ to $e_\theta$):} By Lemma \ref{lem:pvar_change}, this is bounded by $2\sup_y|e_\theta(x,y) - r^*(x,y)|$.
    \item \textbf{Term 2 (Measure change from $\mu_{\theta_0,x}$ to $\mu_{\theta,x}$):} By Lemma \ref{lem:var_change}, this is bounded by $6 \cdot \mathrm{TV}(\mu_{\theta,x}, \mu_{\theta_0,x})$.
    \item \textbf{Term 3 (Reward change from $r_\phi$ to $r^*$):} By Lemma \ref{lem:pvar_change}, this is bounded by $2\sup_y|r_\phi(x,y) - r^*(x,y)|$.
\end{itemize}
Summing these bounds, we get:
$$
\text{PVar}_\theta[x] \le \widehat{\text{PVar}}_{\phi, \theta_0}[x] + 2\sup_y|e_\theta(x,y) - r^*(x,y)| + 6\mathrm{TV}(\mu_{\theta,x}, \mu_{\theta_0,x}) + 2\sup_y|r_\phi(x,y) - r^*(x,y)|.
$$
The term $\sup_y|e_\theta(x,y) - r^*(x,y)|$ can be further bounded by $\sup_y|e_\theta(x,y) - r_\phi(x,y)| + \sup_y|r_\phi(x,y) - r^*(x,y)|$ using triangle inequality. This leads to the prompt-specific error term $\Xi(x; \theta, \phi)$ defined in the theorem statement.
Now, substitute this into the result of Theorem~\ref{thm:dpo_online}:
$$
\left\|\nabla_\theta \mathcal{L}_{\mathrm{DPO}}\left(\pi_\theta , \pi_{\mathrm{ref}}; x\right)\right\| \leq C(x, \theta) \cdot \Big(\widehat{\text{PVar}}_{\phi, \theta_0}[x] + \Xi(x; \theta, \phi)\Big)^{1/3}.
$$
This concludes the proof, as the bound holds for the specific prompt $x$.
\end{proof}

\section{Additional Experimental Results}\label{app:experiment}

\paragraph{Training Configuration.} All experiments were conducted on 4 NVIDIA H100 GPUs. For DPO training, we used the following settings: a per-device batch size of 2 with 4 gradient accumulation steps (total global batch size of 32), a maximum prompt length of 4096, and 2 training epochs. We note that for all evaluation metrics, the standard error typically ranges between 1\% and 2\%.

\paragraph{More Information about AlpacaEval 2.0 and Arena-Hard.} AlpacaEval 2.0 comprises 805 diverse prompts from AlpacaFarm \citep{dubois2023alpacafarm}, covering general human instructions across various scenarios. It employs GPT-4-Turbo as a proxy for human judgment. Performance is primarily measured through win rates (WR) against the reference responses, with length-controlled win rates (LC) specifically designed to mitigate bias toward lengthy responses and encourage concise, effective answers. Arena-Hard consists of particularly difficult prompts that require advanced reasoning, knowledge application, and instruction following. Models are compared against GPT-4-0314 as the baseline. Performance is measured through win rates (WR) as determined by the GPT-4-Turbo judge.

\paragraph{PVar Estimation Generation.} To estimate PVar for each prompt, we generated 5 responses using a maximum length of 2048, a temperature of 0.7, and top-p sampling with p=1.

\paragraph{Evaluation Generation.} For evaluations on AlpacaEval 2.0 and Arena-Hard, we used a temperature of 0.7 and top-p sampling with p=1. We adhered to the standard maximum length settings for each benchmark: 2048 for AlpacaEval 2.0 and 4096 for Arena-Hard.

\paragraph{Specifics for Figures.} The models used in both Figure \ref{fig:train_margin} and the right panel of Figure \ref{fig:combined_plots} are Llama-3.1-8B Instruct, and the training dataset employed for both analyses is the Chatbot Arena Conversation dataset.

\paragraph{Training Margin Analysis.}
\begin{figure}[t]
    \centering
    \includegraphics[width=0.7\linewidth]{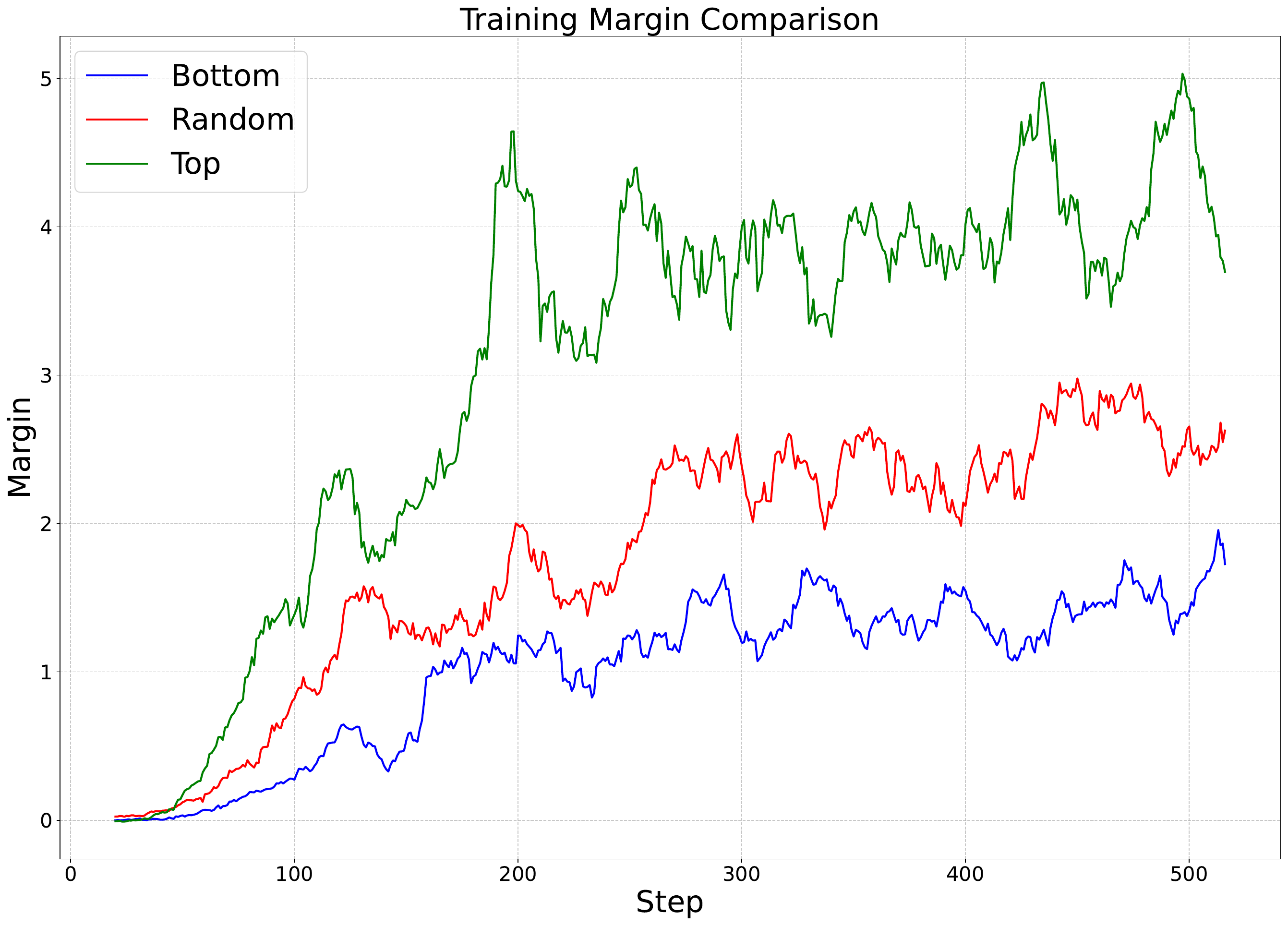}
    \caption{Training margin curves for models fine-tuned on different data subsets selected based on PVar. The margin represents the difference in model confidence between preferred and rejected responses during training. Models trained on the Top 50\% subset (green) show the fastest margin increase and achieve the highest final margin values, while the Bottom 50\% subset (blue) exhibits slower margin growth and lower final margins. The Random 50\% selection (red) demonstrates intermediate performance, confirming that high-PVar examples facilitate more effective preference learning.}
    \label{fig:train_margin}
\end{figure}
Figure~\ref{fig:train_margin} presents the training margin evolution for models fine-tuned on different subsets of data selected using PVar. The margin, calculated as the difference in model confidence scores between preferred and rejected responses, serves as an indicator of how well the model learns to distinguish between response qualities. The results reveal that models trained on the Top 50\% subset (highest PVar) exhibit the most rapid margin increase during the early training phases and ultimately achieve the highest final margin values. Conversely, models trained on the Bottom 50\% subset show the slowest margin growth and converge to lower final margins. The Random 50\% selection demonstrates performance that falls between these two extremes. These margin dynamics support our theory that high-PVar examples provide clearer preference signals, enabling models to develop more robust preference understanding.

\begin{table}[htbp]
\caption{Ablation study on beta parameter ($\beta=0.01$) for different prompt selection strategies. Training on top-PVar prompts consistently achieves the best performance across different models and datasets. Bold numbers indicate the best performance within each model-dataset combination.}
\label{tab:ablation}
\centering
\begin{tabular}{llcccc}
\toprule
\multirow{2}{*}{\textbf{Base Model}} & \multirow{2}{*}{\textbf{Training Set}} & \multirow{2}{*}{\textbf{Selection}} & \multicolumn{2}{c}{\textbf{AlpacaEval 2.0}} & \textbf{Arena-Hard} \\
\cmidrule{4-6}
& & & \textbf{LC (\%)} & \textbf{WR (\%)} & \textbf{WR (\%)} \\
\midrule
\multirow{6}{*}{Llama 3.1-8B-Instruct} & \multirow{3}{*}{\centering UltraFeedback}
& Top    & \textbf{36.8} & \textbf{40.8} & \textbf{33.3} \\
& & Random & 34.0 & 38.4 & 29.4 \\
& & Bottom & 34.9 & 38.4 & 28.6 \\
\cmidrule{2-6}
& \multirow{3}{*}{\centering Chatbot Arena}
& Top    & \textbf{36.1} & \textbf{39.5} & \textbf{32.4} \\
& & Random & 35.2 & 39.4 & 30.6 \\
& & Bottom & 35.0 & 38.6 & 29.4 \\
\midrule
\multirow{6}{*}{Mistral-7B-Instruct-v0.2} & \multirow{3}{*}{\centering UltraFeedback}
& Top    & \textbf{30.1} & \textbf{37.9} & \textbf{20.9} \\
& & Random & 29.0 & 35.4 & 18.9 \\
& & Bottom & 29.0 & 36.8 & 18.9 \\
\cmidrule{2-6}
& \multirow{3}{*}{\centering Chatbot Arena}
& Top    & \textbf{31.3} & \textbf{34.4}  & \textbf{20.4} \\
& & Random & 29.8 & 34.3 & 18.0 \\
& & Bottom & 30.3 & \textbf{34.4} & 16.7 \\
\bottomrule
\end{tabular}
\end{table}
\paragraph{Ablation Study on Hyperparameter.} To validate the robustness of our findings, we conduct an ablation study by varying the beta parameter to $\beta=0.01$ in DPO training following \citep{deng2025less}. Table~\ref{tab:ablation} presents the results of this ablation experiment, maintaining the same experimental setup where data is divided into Top, Random, and Bottom subsets based on PVar. The results demonstrate that even with a different beta value, the Top selection strategy consistently outperforms Random and Bottom selections across all model-dataset combinations. This consistency across different beta values reinforces that the benefits of prioritizing high-PVar prompts in the DPO training process.

\section{Qualitative Analysis}\label{app:qualitative}
As qualitative examples truly illuminate the practical impact of our method. We compared models trained on Top vs. Bottom PVar data and observed clear behavioral differences.

Training on high-PVar prompts leads to a model that is more \textbf{nuanced and capable} in several key areas:
\begin{enumerate}
    \item \textbf{Critical Thinking \& Fact-Checking}: When asked "Do you know why turkeys become the official food of thanksgiving?", the \textbf{Low-PVar} model gives a lengthy explanation accepting the false premise. In contrast, the \textbf{High-PVar} model \textbf{immediately corrects the misconception}, stating, "A common myth has been debunked! Turkeys are not actually the official food of Thanksgiving..." before providing a more critical analysis.
    \item \textbf{Information Organization}: For knowledge queries, the \textbf{Low-PVar} model often produces long, unstructured lists. The \textbf{High-PVar} model \textbf{organizes information with clear categorical headings} (e.g., \textbf{Jazz and Blues}, \textbf{Swing and Dance Music}), making the response more reader-friendly and logical.
\end{enumerate}
These examples suggest that high-PVar prompts, which often reflect human disagreement on complex criteria, train the model to better handle the more sophisticated and nuanced aspects of language and reasoning.

\end{document}